\crefname{enumi}{part}{parts}
\newtheorem{thm}{Theorem}[section]
\crefname{thm}{Theorem}{Theorems}
\newtheorem{lemma}{Lemma}[section]
\crefname{lemma}{Lemma}{Lemmas}
\newtheorem{cor}{Corollary}[section]
\crefname{cor}{Corollary}{Corollaries}
\crefname{cond}{Assumption}{Assumptions}
\crefname{mech}{Mechanism}{Mechanisms}
\crefname{algorithm}{Algorithm}{Algorithms}
\crefname{section}{Section}{Sections}
\crefname{defi}{Definition}{Definitions}
\crefname{table}{Table}{Tables}
\crefname{figure}{Figure}{Figures}
\theoremstyle{remark}
\newtheorem{rem}{Remark}[section]
\crefname{rem}{remark}{remarks}
\theoremstyle{definition}
\newtheorem{defi}{Definition}[section]
\newcommand{\argmin}[1]{\underset{#1}{\textnormal{argmin }}}
\newcommand{\E}{\mathbb{E}}
\title{Gaussian Differential Private Bootstrap by Subsampling}
\author{ Holger Dette\\ \texttt{\small holger.dette@ruhr-uni-bochum.de} \and Carina Graw \\ \texttt{\small carina.graw@ruhr-uni-bochum.de} }
\date{}
\begin{document}
	\maketitle
	\begin{abstract}\noindent
		Bootstrap is a common tool for quantifying uncertainty in data analysis. However, besides additional  computational costs in the application of the bootstrap on massive data,  a challenging  problem in  bootstrap based inference
		under Differential Privacy consists in the fact that  it requires repeated access to the data. As a consequence, bootstrap based differentially private inference requires a significant increase of the privacy budget, which on the other hand comes with a substantial loss in statistical accuracy. 
		
		A potential solution to reconcile the conflicting goals of statistical accuracy and privacy is 
		to analyze the data under parametric  model assumptions and in the last decade, several parametric bootstrap methods for inference under privacy have been investigated. However,  uncertainty quantification  by parametric bootstrap is only valid if the
		the quantities of interest can be identified as the parameters of a statistical model and  the 
		imposed model assumptions are (at least approximately) satisfied.
		An alternative to parametric methods is the empirical bootstrap that is a widely used tool for non-parametric inference and  well studied in the non-private regime. 
		However, under privacy, less insight is available. 
		In this paper, we propose a private empirical $m$ out of $n$ bootstrap and validate its consistency and privacy guarantees under Gaussian Differential Privacy. 
		Compared to the the private $n$ out of $n$ bootstrap, our approach has several advantages. First, it comes with less computational costs, in particular for massive data. Second, the proposed procedure needs less additional noise in the bootstrap iterations,
		which leads to an improved statistical accuracy  while asymptotically guaranteeing the same level of privacy. Third, we demonstrate much better finite sample properties compared to the currently available procedures.
	\end{abstract}
	
	\section{Introduction}
	
	In the era of  big data enormous amounts of - possible sensible - information is collected about individuals and over the last decades  privacy concerns became more and more important. 
	In the past, successful de-anonymisation attacks have been performed leaking private, possible sensitive data of individuals \cite[see, for example, ][]{narayanan2006break,gambs2014anonymization,eshun2022two}.
	{\it Differential Privacy} (DP) is a framework that has been introduced by \citet{dwork2006} to protect an individual against such attacks while still being able to learn something about the population.
	Nowadays, Differential Privacy has become a state-of-the-art framework which has been implemented by the US Census and large companies like Apple, Microsoft and Google \cite[see][]{ding2017collecting,abowd2018us}.
	
	Since its introduction, numerous differentially private algorithms have been developed for many applications and we refer the reader to the recent survey papers by \citet{wang2020comprehensive,xiong2020comprehensive,yang2023local} and the references therein. 
	Nowadays, numerous DP tools are available, but from a statistical perspective there is a lack of general techniques for conducting inference under DP such as the uncertainty quantification of DP estimates or the derivation of DP confidence intervals.
	In the non-private setting, bootstrap is a widely used method to address this question, and parametric bootstrap for private statistical inference have been widely studied \citep[see, for example,][and the references therein]{wangdebiased,ferrando2022parametric,awan2023simulation}.
	However, this method relies on strong (parametric) model assumptions and might yield to invalid inference if these assumptions are violated.
	The non-parametric bootstrap is less sensitive but its application for statistical inference under DP is very challenging as it requires multiple, say $B$, queries to the private data.
	As a consequence, it is very difficult to balance the trade-off between privacy and statistical accuracy, and non-parametric bootstrap under DP is not so widely studied.
	\cite{brawner2018bootstrap} considered the  empirical bootstrap under zero concentrated DP and \cite{wang2022} investigated empirical bootstrap under $f$-DP.
	Furthermore, \cite{chadha2024resampling} proposed a DP non-parametric bootstrap procedure called {\it Bag of little Bootstraps} (BLBQuant).
	
	{\bf Our contribution:} In this paper we present a general non-parametric $f$-DP bootstrap procedure that - in comparison to previous approaches - adds less noise to each bootstrap estimator while guaranteeing the same level of privacy. 
	This leads to an increased accuracy of the obtained confidence intervals, especially for high privacy and small sample sizes. 
	Our approach utilizes the fact that sub-sampling can improve privacy guarantees 
	\citep[see, for example,][]{balle2018privacy,koga2022privacy} 
	and is based on the $m$ out of $n$ bootstrap, which draws a subsample of $m < n$ observations with replacement from the full sample of $n$ observations. 
	On the one hand, this can be used to decrease the noise due to privatization, and on the other hand, it can significantly reduce computational time.
	We validate the statistical  consistency of the $m$ out of $n$ bootstrap and provide a privacy guarantee.
	Furthermore, we investigate the finite sample performance of this new approach by means of a simulation study. Compared to previous work our approach has (at least) three advantages.   First, it comes with much less computational costs, in particular for massive data. Second, the proposed procedure needs less additional noise in the bootstrap iterations,
	which leads to an improved statistical accuracy  while asymptotically guaranteeing the same level of privacy. Third, we demonstrate much better finite sample properties  (with respect to coverage and length of confidence intervals)
	compared to the currently available procedures.
	
	{\bf Structure of the paper:} In \cref{sec:basics} we give a brief overview of $f$-Differential Privacy and the empirical bootstrap. 
	In \cref{sec:mn-bootstrap} we present the $m$ out of $n$ bootstrap and state its consistency and privacy guarantees. 
	Finally, we evaluate the new procedure in \cref{sec:sim} by means of a simulation study while proofs and additional material can be found in the appendix.
	
	\section{Background} \label{sec:basics}
	
	We start by giving a short overview on $f$-Differential Privacy  as introduced  in  \cite{dong2022gaussian} and the bootstrap under Gaussian Differential Privacy proposed by \cite{wang2022}, which is important for our work.
	
	\subsection{$f$-Differential Privacy}
	Let $(\mathcal{X},\mathbb{X})$, $(\mathcal{Y},\mathbb{Y})$ be measurable spaces. 
	We call a randomized mapping $M: \mathcal{X}^n \to \mathcal{Y}$ a mechanism that maps a dataset $S\in \mathcal{X}^n$ onto a random variable $M(S)$. 
	With abuse of notation we will denote its distribution by $M(S)$ as well.
	We call two data sets $S$, $S'\in \mathcal{X}^n$ neighboring if they differ in exactly one entry. 
	A first notion of privacy, \textit{$(\epsilon,\delta)$-Differential Privacy}, was introduced by \cite{dwork2006}:
	\begin{defi}
		\label{def:f-dp}
		A mechanism $M$ is $(\epsilon,\delta)$-{\it differentially private} (DP), if
		\begin{equation*}
			P(M(S)\in\mathcal{S})\leq e^{-\epsilon} P(M(S')\in\mathcal{S}) + \delta
		\end{equation*}
		for all events $\mathcal{S}\in \mathbb{Y}$ and for all neighboring datasets $S,S'\in \mathcal{X}^n$ .
	\end{defi}
	However, the meaning of the parameters $\epsilon,\delta$ and their interpretation is not obvious \citep[see][for a statistical interpretation in terms of testing]{wasserman2010statistical}. \textit{$f$-Differential Privacy} as  recently  introduced by  \cite{dong2022gaussian} 
	is an alternative definition of privacy, that is more natural from a statistical point of view. It is given in terms of type I and type II errors of the testing problem whether the input of the mechanism $M$ was $S$ or $S'$.
	\begin{defi}
		For two probability distributions $P$ and $Q$ on the same space, define the trade-off function $T(P,Q):[0,1]\to[0,1]$ as
		\[T(P,Q)(\alpha)=\inf\{\beta_\phi: \alpha_\phi\leq \alpha\}, \]
		where the infimum is taken over all (measurable)     rejection rules for the hypotheses $H_0: P$  versus  $H_1: Q$ and $\alpha_\phi$ ($\beta_\phi$) denotes the type I (type II) error of a rejection rule $\phi$.\\
		Let $f: [0, 1]  \to  [0, 1] $ be a trade-off function. A mechanism $M$ is said to be $f${\it -differentially private} ($f$-DP) if the inequality  
		\[
		T(M(S),M(S'))\geq f
		\]
		holds 
		for all neighboring datasets $S,S'\in \mathcal{X}^n$ (i.e. datasets that differ in one entry). Here, the inequality   $g\geq h$ for two functions $g,h:[0,1]\to[0,1]$ means that $g(x)\geq h(x)$ for all $x\in [0,1]$.\\
		A mechanism $M$ is said to satisfy $f$-DP for groups of size $k$, if
		\[T(M(S),M(S'))\geq f\]
		for all $k$-neighboring datasets $S$ and $S'$ (i.e. datasets that differ in $k$ entries).\\
	\end{defi}
	
	A special case of $f$-DP is {\it Gaussian Differential Privacy} (GDP), where the function $f$ in \cref{def:f-dp} is the trade off function between two normal distributions with variance $1$, one with mean zero and the other  with mean $\mu>0$. 
	Denoting a normal distribution with mean $\mu$ and variance $\sigma^2$ by $N(\mu,\sigma^2)$,
	$\mu$-GDP means that distinguishing the output of a mechanism based on two neighboring databases is as least as hard as distinguishing the standard normal distribution $N(0,1)$ from $N(\mu,1)$.
	A straightforward calculation shows that the trade-off function  $G_\mu:[0,1]\to[0,1]$ of a $N(0,1)$ and $N(\mu,1)$ distribution is given by 
	\begin{align}
		\label{det102}
		G_\mu(\alpha) = T(N(0,1),N(\mu,1))(\alpha)=\Phi(\Phi^{-1}(1-\alpha)-\mu) , 
	\end{align}
	where  $\Phi$ denotes the CDF of the standard normal distribution.
	
	\begin{defi} [$\mu$-Gaussian Differential Privacy]\label{def:gaussianDP}
		A mechanism $M$ is said to satisfy $\mu$-Gaussian Differential Privacy ($\mu$-GDP) if it is $G_\mu$-DP, that is 
		\[T(M(S),M(S'))\geq G_\mu\]
		for all neighboring datasets $S$ and $S'$.
	\end{defi}
	The next result shows that the Gaussian Mechanism is $\mu$-GDP.
	\begin{thm}[Gaussian Mechanism] \label{thm:gauss_mech}
		Let $\theta(S_n)$ be a univariate statistic computed on a dataset $S_n$ of size $n$ and define the sensitivity of $\theta$ as
		\begin{equation} \label{eq:def-delta}   \Delta=\Delta_\theta(n)=\sup_{S_n,S'_n neighboring} \lvert \theta(S_n)-\theta(S'_n)\rvert .
		\end{equation}
		Let $\xi \sim N(0, \sigma_\mu^2)$, where  $\sigma_\mu^2=\Delta^2 / \mu^2$, then  the Gaussian mechanism  \begin{equation}\label{mech:gaussian-mech}
			M_{Gauss}(S_n)=\theta(S_n) + \xi,
		\end{equation} 
		is $\mu$-GDP.
	\end{thm}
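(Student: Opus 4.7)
The plan is to reduce the claim to the one-dimensional Gaussian location testing problem, for which the trade-off function has already been computed in \eqref{det102}. Fix neighboring datasets $S_n$ and $S_n'$ and write $d = \theta(S_n) - \theta(S_n')$. Then $M_{Gauss}(S_n) \sim N(\theta(S_n), \sigma_\mu^2)$ and $M_{Gauss}(S_n') \sim N(\theta(S_n'), \sigma_\mu^2)$, so the required inequality
\[
T\bigl(M_{Gauss}(S_n), M_{Gauss}(S_n')\bigr) \geq G_\mu
\]
is an inequality between trade-off functions of two Gaussian families.

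First I would establish two elementary invariance properties of $T(P,Q)$ that most likely are already recorded in \cite{dong2022gaussian} and can be cited: (i) translation invariance, $T(P,Q) = T(P \ast \delta_c, Q \ast \delta_c)$, so shifting both means by the same constant does not change the trade-off function, and (ii) scale invariance, $T(\lambda P, \lambda Q) = T(P,Q)$ for $\lambda \neq 0$, where $\lambda P$ denotes the pushforward under $x \mapsto \lambda x$. Applying (i) with $c = -\theta(S_n')$ and (ii) with $\lambda = 1/\sigma_\mu$ yields
\[
T\bigl(N(\theta(S_n), \sigma_\mu^2), N(\theta(S_n'), \sigma_\mu^2)\bigr) = T\bigl(N(d/\sigma_\mu, 1), N(0,1)\bigr) = G_{|d|/\sigma_\mu},
\]
where in the last step I use that $T(N(a,1), N(0,1)) = T(N(0,1), N(-a,1)) = G_{|a|}$ by \eqref{det102} (the sign of the mean shift is irrelevant by symmetry of the optimal Neyman--Pearson test).

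The second step is to compare $G_{|d|/\sigma_\mu}$ with $G_\mu$. By definition of the sensitivity in \eqref{eq:def-delta} we have $|d| \leq \Delta$, and by the choice $\sigma_\mu = \Delta/\mu$ this gives $|d|/\sigma_\mu \leq \mu$. Finally I would invoke monotonicity of the map $\mu \mapsto G_\mu$: since $G_\mu(\alpha) = \Phi(\Phi^{-1}(1-\alpha) - \mu)$ is decreasing in $\mu$ for every fixed $\alpha \in [0,1]$, we obtain $G_{|d|/\sigma_\mu} \geq G_\mu$ pointwise, completing the proof.

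The main obstacle is mostly bookkeeping rather than mathematics: one needs to be careful that the trade-off function is symmetric in its arguments only up to the obvious reflection $\alpha \leftrightarrow 1-\alpha$, and that the Neyman--Pearson lemma together with monotonicity of the Gaussian likelihood ratio is what legitimizes identifying $T(N(a,1),N(0,1))$ with $G_{|a|}$ in the second display. Once those invariance and monotonicity facts are in place, the conclusion is immediate.
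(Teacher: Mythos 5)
Your proposal is correct. The paper itself gives no proof of this theorem --- it is imported from \cite{dong2022gaussian} (where it is the basic Gaussian-mechanism result for GDP) --- and your argument is exactly the standard one used there: reduce $T\bigl(M_{Gauss}(S_n),M_{Gauss}(S_n')\bigr)$ by translation and scale invariance of trade-off functions to $G_{|d|/\sigma_\mu}$, then use $|d|\leq \Delta$, $\sigma_\mu=\Delta/\mu$ and the pointwise monotonicity of $\mu\mapsto G_\mu$ to conclude $G_{|d|/\sigma_\mu}\geq G_\mu$. The only cosmetic point worth noting is the degenerate case $\Delta=0$ (identical outputs, trade-off function $1-\alpha\geq G_\mu$), and the implicit assumption $\Delta<\infty$, neither of which affects the validity of your argument.
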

	Similar to $(\epsilon,\delta)$-DP mechanisms, 
	$f$-DP mechanisms,  have keen properties under post processing, composition and extension to group privacy, which are summarized in the following statements. 
	\begin{thm} \label{thm:properties_f_DP}
		~~
		\begin{enumerate}
			\item If a mechanism $M$ is $f$-DP, then its post-processing $Proc\circ M$ is also $f$-DP.
			\item Define the tensor product of two trade-off functions $f=T(P,Q)$, $g=T(P',Q')$ as
			\[
			f\otimes g := T(P\times P', Q\times Q') ,
			\]
			where $P\times P'$ denotes the product measure of the distributions $P$, $ P'$.
			Let $M_i(\cdot,y_1,\ldots, y_{i-1}):\mathcal{X}^n\to \mathcal{Y}_i$ be $f_i$-DP for all $y_1\in \mathcal{Y}_1,\ldots, y_{i-1}\in \mathcal{Y}_{i-1}$. Then the $n$-fold composed mechanism $M:\mathcal{X}^n\to \mathcal{Y}_1\times \ldots \times\mathcal{Y}_n$ is $f_1\otimes\ldots \otimes f_n$-DP. \\
			In particular, if $f_i=G_{\mu_i}$ for $i=1,\ldots, n$, the composed mechanism is $\mu$-GDP, where $\mu=\sqrt{\mu_1^2+\ldots +\mu_n^2}$.
			\item If a mechanism $M$ is $\mu$-GDP, then it is $k\mu$-GDP for groups of size $k$.
		\end{enumerate}
	\end{thm}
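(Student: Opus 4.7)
The three parts are the standard properties of $f$-DP established by Dong, Roth, and Su, and each has its own flavor of argument.

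For part 1 (post-processing), the natural approach is a dual-test argument. Given any measurable rejection rule $\phi$ distinguishing $Proc \circ M(S)$ from $Proc \circ M(S')$, the composition $\phi \circ Proc$ is a valid rule distinguishing $M(S)$ from $M(S')$ with identical type I and type II errors. Hence every $(\alpha,\beta)$ achievable against the post-processed pair is already achievable against the raw pair, so $T(Proc\circ M(S), Proc\circ M(S')) \geq T(M(S), M(S')) \geq f$.

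For part 2 (composition), the key building block is the identity $T(P\times P', Q\times Q') = T(P,Q)\otimes T(P',Q')$. This should be proved via Neyman--Pearson: the likelihood ratio of the product equals the product of marginal likelihood ratios, so the family of optimal tests on the product space is generated by thresholds of the product ratio, and tracing through the achievable $(\alpha,\beta)$ region recovers exactly the tensor-product trade-off function. With this identity in hand, I would prove the composition claim by induction on $n$; the inductive step needs care because $M_i$ is $f_i$-DP only conditionally on the outputs $y_1,\ldots,y_{i-1}$, so I would write the joint distribution of the composed mechanism as an iterated kernel, apply the identity to the marginal step by step, and use the fact that the tensor product is monotone in each argument with respect to the pointwise ordering of trade-off functions. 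The Gaussian corollary then follows from $G_{\mu_1}\otimes G_{\mu_2}=G_{\sqrt{\mu_1^2+\mu_2^2}}$, which can be checked directly from \eqref{det102} or by noting that the two-sample testing problem $N(0,1)^{\otimes 2}$ versus $N(\mu_1,1)\times N(\mu_2,1)$ is equivalent, by projection onto the mean direction, to a scalar Gaussian location test with signal-to-noise ratio $\sqrt{\mu_1^2+\mu_2^2}$.

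For part 3 (group privacy), I would use a telescoping-chain argument. Pick datasets $S=S_0,S_1,\dots,S_k=S'$ with $S_{i-1}$ and $S_i$ neighboring for each $i$. By $\mu$-GDP, $T(M(S_{i-1}),M(S_i))\geq G_\mu$, and the remaining step is to accumulate $k$ such one-step bounds into a single bound $T(M(S_0),M(S_k))\geq G_{k\mu}$. This follows from the fact that shifting the Gaussian mean $k$ times by $\mu$ along a chain is equivalent to a single shift of $k\mu$, which one can formalize by checking that the trade-off function $G_\mu$ is closed under the chain-composition operation $(f,g)\mapsto f\circ g^{-1}\wedge g\circ f^{-1}$ that controls trade-offs along data-chains, with $G_\mu$ iterating to $G_{k\mu}$.

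The step I expect to be the main obstacle is the product-measure identity in part 2 together with its adaptive extension: once that is cleanly in place, the rest of part 2 and the Gaussian specialization are routine. Parts 1 and 3 are essentially bookkeeping after setting up the right test-theoretic and chain-of-datasets framework.
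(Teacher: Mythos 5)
The paper itself does not prove this theorem: it is stated as background and taken from \citet{dong2022gaussian}, so there is no in-paper proof to compare against. Your overall route (test-composition argument for post-processing, Neyman--Pearson/product-likelihood argument for the tensor product, induction for composition, a chain of neighboring datasets for group privacy) is the standard one from that reference, and part 1 as well as the Gaussian specialization $G_{\mu_1}\otimes G_{\mu_2}=G_{\sqrt{\mu_1^2+\mu_2^2}}$ are correct as you describe them. However, note that in part 2 the identity $T(P\times P',Q\times Q')=T(P,Q)\otimes T(P',Q')$ is essentially the definition given in the statement; what your Neyman--Pearson argument really supplies is that the right-hand side depends only on $f$ and $g$ (well-definedness). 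The genuinely hard step is the adaptive one: when $M_i$ depends on $y_1,\ldots,y_{i-1}$, the joint law of the composed mechanism is \emph{not} a product measure, so ``applying the identity to the marginal step by step and using monotonicity of $\otimes$'' does not yield the claim. One needs the composition lemma for the two-stage case with a conditional second stage (if the first stage has trade-off $\geq f$ and every conditional second stage has trade-off $\geq g$, then the pair of joint laws has trade-off $\geq f\otimes g$); this is the technical core of the proof in \citet{dong2022gaussian} and is missing from your sketch.

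In part 3 the chaining rule you write down is wrong. The operation $(f,g)\mapsto f\circ g^{-1}\wedge g\circ f^{-1}$ applied to $f=g=G_\mu$ gives the identity map, since $G_\mu$ is self-inverse ($G_\mu\circ G_\mu=\mathrm{id}$, as one checks from \eqref{det102}); the identity is not even an admissible lower bound for a trade-off function (every trade-off function satisfies $f(\alpha)\leq 1-\alpha$), and iterating it can never produce $G_{k\mu}$. The correct accumulation step is: if $T(P,Q)\geq f$ and $T(Q,R)\geq g$, then $T(P,R)\geq g\circ(1-f)$, because any test $\phi$ with $\mathbb{E}_P[\phi]=\alpha$ satisfies $\mathbb{E}_Q[\phi]\leq 1-f(\alpha)$ and hence $\mathbb{E}_R[1-\phi]\geq g(1-f(\alpha))$ by monotonicity of $g$. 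Iterating this along your chain $S_0,\ldots,S_k$ gives the group-privacy bound $1-(1-f)^{\circ k}$, and a direct computation with \eqref{det102} shows $1-(1-G_\mu)^{\circ k}=G_{k\mu}$, which is the assertion of part 3. So the telescoping idea is right, but the formal chaining operation must be replaced by $g\circ(1-f)$ for the argument to go through.
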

	
	There is an equivalence between $\mu$-GDP and the widely used notion of $(\epsilon,\delta)$-DP:
	
	\begin{lemma}
		\label{lem:dp_to_gdp}
		A mechanism is $\mu$-GDP if and only if it is $(\epsilon, \delta(\epsilon))$-DP for all $\epsilon\geq 0$, where
		\[
		\delta(\epsilon,\mu)=\delta(\epsilon)=\Phi(-\epsilon/\mu +\mu/2) - e^\epsilon\Phi(-\epsilon/\mu - \mu/2) .
		\]
	\end{lemma}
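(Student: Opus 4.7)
The plan is to express both privacy notions as pointwise lower bounds on the trade-off function $T(M(S),M(S'))$ and to reduce the equivalence to the geometric fact that $G_\mu$ is the upper envelope of the affine functions $\ell_\epsilon(\alpha)=1-\delta(\epsilon,\mu)-e^\epsilon\alpha$, $\epsilon\geq 0$. Convexity of $G_\mu$ together with its self-inverse symmetry then makes this a routine calculus and Fenchel-duality exercise.

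First I would establish the standard dual characterisation of $(\epsilon,\delta)$-DP: a mechanism $M$ is $(\epsilon,\delta)$-DP if and only if
\[
T(M(S),M(S'))(\alpha)\;\geq\;\max\!\bigl\{0,\,1-\delta-e^\epsilon\alpha,\,e^{-\epsilon}(1-\delta-\alpha)\bigr\}
\]
for every neighboring pair $S,S'$ and every $\alpha\in[0,1]$. This follows by choosing $\mathcal{S}$ in \cref{def:f-dp} to be the rejection region of an arbitrary test with type I error $\alpha$, together with applying the same inequality to the swapped pair $(S',S)$. Since $G_\mu$ is self-inverse ($G_\mu\circ G_\mu=\mathrm{id}$), the two nontrivial affine pieces of the maximum are related by the reflection $(\alpha,\beta)\mapsto(\beta,\alpha)$, so the condition $T(M(S),M(S'))\geq G_\mu$ is preserved under it; the criterion therefore reduces to $T(M(S),M(S'))(\alpha)\geq \ell_\epsilon(\alpha)$ on $[0,1]$.

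For the forward direction, assume $\mu$-GDP. It suffices to show $G_\mu(\alpha)\geq \ell_\epsilon(\alpha)$ for every $\epsilon\geq 0$, i.e.\ that $\delta(\epsilon,\mu)\geq \sup_\alpha h_\epsilon(\alpha)$ with $h_\epsilon(\alpha)=1-e^\epsilon\alpha-G_\mu(\alpha)$. Setting $u=\Phi^{-1}(1-\alpha)$ in $G_\mu(\alpha)=\Phi(u-\mu)$ gives $G_\mu'(\alpha)=-\phi(u-\mu)/\phi(u)=-\exp(\mu u-\mu^2/2)$, so $h_\epsilon'(\alpha^*)=0$ yields $u^*=\epsilon/\mu+\mu/2$ and $\alpha^*=\Phi(-\epsilon/\mu-\mu/2)$. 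Substituting back,
\[
h_\epsilon(\alpha^*)=\Phi(-\epsilon/\mu+\mu/2)-e^\epsilon\Phi(-\epsilon/\mu-\mu/2)=\delta(\epsilon,\mu),
\]
and concavity of $h_\epsilon$ (equivalently convexity of $G_\mu$) certifies this as a global maximum. The first step then yields $(\epsilon,\delta(\epsilon,\mu))$-DP for every $\epsilon\geq 0$.

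For the reverse direction, assume $(\epsilon,\delta(\epsilon,\mu))$-DP for every $\epsilon\geq 0$. By the dual characterisation, $T(M(S),M(S'))(\alpha)\geq \ell_\epsilon(\alpha)$ for all $\epsilon$ and $\alpha$. The calculation above identifies $\ell_\epsilon$ as the tangent line to $G_\mu$ at $\alpha^*(\epsilon)=\Phi(-\epsilon/\mu-\mu/2)$; as $\epsilon$ ranges over $[0,\infty)$, $\alpha^*(\epsilon)$ sweeps $(0,1/2]$, and the symmetric condition on the swapped pair $(S',S)$ (reflected via $G_\mu\circ G_\mu=\mathrm{id}$) covers $[1/2,1)$. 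Because a convex function on $[0,1]$ equals the supremum of its supporting affine minorants, $\sup_{\epsilon\geq 0}\ell_\epsilon(\alpha)=G_\mu(\alpha)$ on all of $[0,1]$, so $T(M(S),M(S'))\geq G_\mu$, i.e.\ $M$ is $\mu$-GDP.

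The main obstacle is bookkeeping the two affine pieces in the dual bound of Step 1 and, in the converse direction, verifying that the one-parameter family $\{\ell_\epsilon\}_{\epsilon\geq 0}$ of tangents is rich enough---via the self-inverse symmetry of $G_\mu$---to reconstruct the full curve on $[0,1]$, not only on $[0,1/2]$. Once this envelope argument is in place, the closed-form evaluation $h_\epsilon(\alpha^*)=\delta(\epsilon,\mu)$ is a direct optimisation.
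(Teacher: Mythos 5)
The paper itself offers no proof of \cref{lem:dp_to_gdp}; it is quoted from \cite{dong2022gaussian}, where it is obtained exactly by the duality you use: $(\epsilon,\delta)$-DP rewritten as the trade-off bound $T\geq f_{\epsilon,\delta}=\max\{0,\,1-\delta-e^\epsilon\alpha,\,e^{-\epsilon}(1-\delta-\alpha)\}$, the identification $\delta(\epsilon,\mu)=\sup_\alpha\bigl(1-e^\epsilon\alpha-G_\mu(\alpha)\bigr)=1+G_\mu^\ast(-e^\epsilon)$, and the recovery of the convex curve $G_\mu$ as the upper envelope of its supporting lines. Your forward direction is correct: the computation $u^\ast=\epsilon/\mu+\mu/2$, $\alpha^\ast=\Phi(-\epsilon/\mu-\mu/2)$, $h_\epsilon(\alpha^\ast)=\delta(\epsilon,\mu)$ together with concavity of $h_\epsilon$ and the self-inverse symmetry of $G_\mu$ gives $G_\mu\geq f_{\epsilon,\delta(\epsilon)}$ for every $\epsilon\geq 0$, hence $(\epsilon,\delta(\epsilon))$-DP.

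In the converse direction there is one slip, though it is harmless: the tangency points $\alpha^\ast(\epsilon)=\Phi(-\epsilon/\mu-\mu/2)$ sweep $\bigl(0,\Phi(-\mu/2)\bigr]$, not $(0,1/2]$, since at $\epsilon=0$ one gets $\Phi(-\mu/2)<1/2$ (the lines $\ell_\epsilon$ with $\epsilon\geq 0$ only realize slopes $\leq -1$). Correspondingly, the reflected family $\ell_\epsilon^{-1}$ supports $G_\mu$ at abscissae $\Phi(\epsilon/\mu-\mu/2)$, which sweep $\bigl[\Phi(-\mu/2),1\bigr)$ rather than only $[1/2,1)$. The union of the two families therefore still covers all of $(0,1)$, and together with $T(0)\geq\sup_\epsilon(1-\delta(\epsilon))=1$ the envelope argument goes through unchanged; so the conclusion $T\geq G_\mu$ is valid once the endpoints are corrected. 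With that repair your proof is complete and coincides in substance with the argument in \cite{dong2022gaussian} underlying the lemma as cited here.
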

	\subsection{$f$-DP Bootstrap}
	
	In order to provide statistical guarantees for  DP estimators, one usually assumes that the available data points $x_1, \ldots x_n$  are realizations of random variables $X_1, \ldots X_n$.
	For the sake of simplicity,  let $X_1,\ldots,X_n\sim G$ be independent and identically distributed random variables following a distribution $G$. 
	Denote by $\delta_x$ the Dirac measure at the point $x$ and by  
	\begin{align}
		\label{det1}
		G_n  = \frac{1}{n} \sum_{i=1}^n  \delta_{X_i} 
	\end{align}
	the  empirical distribution of $X_1, \ldots,  X_n$. In other words: the the distribution $G_n$ puts masses $\frac{1}{n} $ at the points $X_1, \ldots , X_n$.
	For a parameter of interest  $\theta=\theta(G)$ depending on the underlying distribution $G$ 
	a  natural estimator  is then given by 
	\begin{align}
		\label{det2}
		\hat{\theta}_n:=\theta(X_1, \ldots,X_n):=\theta(G_n)
	\end{align} 
	(a typical example in the real case is the mean $\theta (G) = \int_{\mathbb{R}} x dG(x) =\E[X]$ with estimator $\hat{\theta}_n = \theta(G_n) = \int_{\mathbb{R}} x dG_n(x) = \frac{1}{n} \sum_{i=1}^n X_i$).
	Note that  $\hat{\theta}_n$ is a random variable with a distribution depending  on the unknown underlying distribution $G$. For statistical inference regarding the parameter $\theta (G) $ the (unknown) distribution of  the statistic 
	$\sqrt{n}(\theta(G_n)-\theta(G))$ is of interest. 
	For example, if this  distribution would be known and  $q_\alpha$ denotes its $\alpha$ quantile, then 
	\begin{align}
		\label{eqn:CIs}
		\Big[ \theta(G_n) -\frac{1}{\sqrt{n}}q_{1-\alpha}, \theta(G_n)-\frac{1}{\sqrt{n}}q_{\alpha}\Big ] 
	\end{align}
	defines an $(1-2\alpha)$-confidence interval for the parameter $\theta(G)$.
	However, in general this distribution is not known and the interval $\eqref{eqn:CIs}$ cannot be used in practice. 
	In many cases the distribution of the statistic $\sqrt{n}(\theta(G_n)-\theta(G))$ can be approximated by a known limiting distribution for large sample size $n$ (this is actually the reason why we use the scaling factor $\sqrt{n}$ in \eqref{eqn:CIs}).
	However, this limiting distribution depends on further unknown parameters and it has to be worked out case by case.
	\\
	Bootstrap is a widely used principle for uncertainty quantification in  statistical inference avoidng these problems. 
	The basic idea  is the following: assume that a known distribution $G^\star$ is - in some sense - close to the unknown distribution $G$, then the distribution of the statistic $T_n:= \sqrt{n}(\theta(G_n)-\theta(G))$ is close to the one of $T_n^*:=\sqrt{n}(\theta(G_n^\star)-\theta(G^\star))$, where $G_n^\star$ denotes the empirical distribution 
	based on a so called bootstrap sample $X_1^*, \ldots , X_n^* \sim G^\star$. 
	Since $G^\star$ is known one can  generate such a sample by simulation to obtain a draw from the distribution of  
	$T_n^\star$.
	By repeating this procedure  $B$ times we obtain $B$  (independent) bootstrap replications  $T_{n,1}^\star , \ldots , T_{n,B}^\star$, where $T_{n,b}^\star =\sqrt{n}(\theta^*_{n,b} -\theta(G^\star)) $ and $\theta^*_{n,b}$ denotes the estimator of $\theta$ in the $b$-th run. By the law of large numbers,
	\begin{align} \label{det3a} 
		\frac{1}{B} 
		\sum_{b=1}^B 
		I \{ T_{n,b}^\star \leq x \} \approx  \mathbb{P} (T_n^* \leq x) \approx  \mathbb{P} (T_n \leq x)  ~
	\end{align}
	and consequently we can approximate  the  $\alpha$-quantile of the distribution of $T_n$ by the  empirical $\alpha$-quantile of the sample $T_{n,1}^\star , \ldots , T_{n,B}^\star$. 
	Using this quantile in \eqref{eqn:CIs}
	would yield to a valid $(1-2\alpha )$ confidence interval for the parameter $\theta (G)$. \\
	To make this work in practice, three issues have to be resolved. 
	First, one has to find an estimator for the unknown distribution $G^\star$ and we use the empirical distribution function $G_n$ in \eqref{det1} for this purpose. 
	Next, one has to prove that the second approximation in \eqref{det3a} is valid (for large $n$).
	Note that  drawing a sample 
	$X_1^\star , \ldots , X_n^\star $ from $G_n$ means  sampling uniformly with replacement from the original sample $X_1,\ldots,X_n$.
	And at last, we have to generate a sufficiently 
	large number of bootstrap replications such that  the first approximation in \eqref{det3a} is  valid. By the law of large numbers it is always valid if $B $ is chosen sufficiently large, but in practice computational costs have to be taken into account.
	For the sake of completeness, the empirical bootstrap is stated in \cref{algo:emprircal-bootstrap}.
	
	\begin{algorithm}[ht]
		\begin{algorithmic}
			\State{{\bf Input:} } Data $X_1,\ldots,X_n$, estimator $\theta(\cdot)$, number of bootstrap replications $B$
			\State{{\bf Output:} }  estimator $\hat{\theta}_n$, bootstrap replications $T_{n,1}^\star,\ldots,T_{n,B}^\star$ 
			\State Compute $\hat{\theta}_n=\theta(X_1,\ldots,X_n)$
			\For{$b=1,\ldots,B$}
			\State Draw $X_1^\star,\ldots,X_n^\star$ uniformly with replacement from $X_1,\ldots,X_n$
			\State Compute $\theta_{n,b}^\star= \theta(X_1^\star,\ldots,X_n^\star)$
			\State Compute $T_{n,b}^\star =\sqrt{n}(\theta_{n,b}^\star - \hat{\theta}_n)$
			\EndFor
		\end{algorithmic}
		\caption{Empirical Bootstrap}
		\label{algo:emprircal-bootstrap}
	\end{algorithm}
	
	Under privacy constraints, the empirical bootstrap becomes very challenging, as it not only incurs computational costs for the resampling but also requires repeated access to the data.
	Even for the simple case that an estimator is computed on one bootstrap sample that is obtained by drawing $n$ times uniformly with replacement from the original dataset, privacy is a serious issue.
	In the worst case, one data point can be included $n$ times in the new sample, resulting in a constant sensitivity in \eqref{eq:def-delta} of the estimator from the bootstrap sample that does not decrease with the sample size. 
	For most privacy mechanisms, such as the Gaussian mechanism \eqref{mech:gaussian-mech}, the additional inaccuracy of the estimator due to privacy scales with the sensitivity.
	A constant sensitivity implies that the accuracy of a private estimator from the bootstrap sample does not increase with increasing sample size, which is not desirable.
	However, the event that a single data point is included often in the new sample only occurs with small probability, in particular if the sample size $n$ is large. 
\cite{dong2022gaussian} utilize this observation to investigate how the $f$-DP guarantee is affected by one resampling procedure with replacement. 
\cite{wang2022} expand their analysis to multiple resampling iterations $B$ and propose a $\mu$-GDP bootstrap procedure, where each bootstrap estimator is privatized by the Gaussian Mechanism \eqref{mech:gaussian-mech} with an adjusted privacy budget $\mu_B$.
While in the non-private case the number of bootstrap samples $B$ can be chosen arbitrarily large, this is not the case anymore under privacy constraints since $\mu_B$ decreases with the rate $1/\sqrt{B}$ and, as a direct consequence, the variance in the Gaussian mechanism grows linearly  with $B$.
Therefore, an increased number of bootstrap samples $B$ leads to a larger error of the bootstrap estimators due to privacy and as a consequence  to a less accurate approximation of the distribution of interest.
On the other hand, choosing $B$ too small leads to a high inaccuracy of the bootstrap caused by a poor approximation by  the empirical distribution of the bootstrap estimator (see the first approximation in \eqref{det3a}).
As a rule of thumb, \cite{wang2022} proposed to choose $B$ proportional to $n\mu^2$, that is
\begin{align}
	\label{det6}
	B=\Theta(n\mu^2)
\end{align}
to get the best of both worlds.

\section{$m$ out of $n$ Bootstrap}
\label{sec:mn-bootstrap}
Especially for moderate  samples and  high privacy (i.e. small $\mu$) requirements, the rule of thumb \eqref{det6}   
would result in a rather small number of bootstrap iterations $B$.
For example, if $\mu=0.1$ and  the sample size $n=1000$ one obtains  
for the number of bootstrap replications $B=10$, which is not sufficient to compute empirical $5\%$ quantiles with good accuracy.
In this section, we  attempt to reduce the additional variance due to privacy in the bootstrap iterations without affecting the privacy guarantees. 
Our approach is based  the $m$ out of $n$ bootstrap, where  the private  estimators are computed only from 
a subsample of bootstrap observations of size $m$, which is substantially smaller than the size $n$ of the original sample.
The $m$ out of $n$ bootstrap in the non-private case is introduced in \cite{politis1994large}.
Under the additionally requirement $m=o(n)$, it is often consistent in scenarios where the $n$ out of $n$ bootstrap fails (see, for example, \cite{bickel1997resampling}).
Furthermore, computational time is reduced for the $m$ out of $n$ bootstrap, since each bootstrap estimator is only computed from a subsample of size $m < n$.
In the context of  DP a 
smaller size of the bootstrap sample decreases the probability of an individual being included in the bootstrap sample and therefore less noise has to be added to guarantee the same level of privacy. 
As a consequence one can choose a larger number $B$ of bootstrap iterations to decrease the error in the first approximation in \eqref{det3a}.
However,  while a larger $B$ improves the accuracy of this approximation, it increases the additional variance in  the estimator from each bootstrap sample, leading to a less accurate approximation of the distribution of interest (see the second approximation in \eqref{det3a}). 
\smallskip

We now investigate  the $m$ out of $n$ bootstrap procedure and the trade-off between the choice of  $B$ and $m$
in more detail.
For this purpose, let $\hat{\theta}_n = \theta(G_n)$ denote the estimate \eqref{det2} of  
the parameter $\theta=\theta(G)$ from the sample  $X_1,\ldots, X_n\sim G$, where    $G_n$ is the empirical distribution function defined in \eqref{det1}. 
We assume that the sensitivity \eqref{eq:def-delta} of $\hat{\theta}_n$ is given by  $\Delta_\theta(n)= l/n$  for some constant $l>0$. 
Then, by \cref{thm:gauss_mech},
a $\mu$-GDP estimator $\bar{\theta}_n$ for $\theta$ is given by the Gaussian mechanism \eqref{mech:gaussian-mech}, that is 
\begin{equation}\label{eqn:gdp_esti}
	\bar{\theta}_n = \hat{\theta}_n + Z ~,
\end{equation}
where $Z\sim N\left(0,\sigma_\mu^2\right)$ and 
$\sigma_\mu^2=
\frac{l^2}{n^2\mu^2}$.
The distribution of this estimator is approximated by the $m$ out of $n$ bootstrap, which draws  $m$ data points  uniformly with replacement from the original sample, that is $X_1,^\star,\ldots,X_m^\star\sim G_n$. We define the corresponding  empirical distribution  by $G_m^\star = \frac{1}{m} \sum_{i=1}^m \delta_{X_i^\star} $ and consider the  estimator 
\begin{align}
	\hat{\theta}^{\star}_m &= \theta(X_1^\star,\ldots,X_m^\star)=\theta(G_m^\star) \label{eqn:bootstrap_np}
\end{align} 
for $\theta (G) $ from the sample $X_1^\star,\ldots,X_m^\star$.
We use again the Gaussian mechanism to obtain a privatized version of  $\hat{\theta}^{\star}_m$, that is
\begin{align}
	\bar{\theta}^{\star}_m &= \hat{\theta}^{\star}_m+ Y ~, \label{eqn:bootstrap_private}
\end{align}
where the random variable $Y \sim N(0,\sigma_{m,B}^2)$ is independent of $X_1,\ldots,X_n$ and 
\begin{align}
	\label{det5}
	\sigma_{m,B}^2 &=B(1-(1-1/n)^m)\frac{n+m-1}{n} \frac{l^2}{mn\mu^2}=\frac{\Delta_\theta^2(m)}{\mu_B^2(m,n)}
\end{align}
with
\begin{align}
	\mu_B^\star&=\mu_B(m,n)= \frac{\mu}{\sqrt{B\left(1-\left(1-\frac{1}{n}\right)^m\right)\left(\frac{n+m-1}{n}\right)\frac{m}{n}}} ~.
	\label{det3}
\end{align} 
We now  approximate the unknown distribution of the statistic $\bar T_n =\sqrt{n}(\bar{\theta}_n-\theta)$ by the distribution $ \bar T_m^*=\sqrt{m}(\bar{\theta}_m^\star-\bar{\theta}_n)$ of the privatized  bootstrap estimator.
This procedure is summarized in \cref{algo:mn_dp_bootstrap}, which 
gives a sample $\bar T_{m,1}^* , \ldots , \bar T_{m,B}^*$ (of size $B$) from  the bootstrap distribution, where $\bar T_{m,b}^* =  \sqrt{m}(\bar{\theta}^{\star}_{m,b}-\bar{\theta}_n)$. 
If ${q}_{1-\alpha}^\star$ denotes the empirical $(1-\alpha )$-quantile of the bootstrap sample $\bar{T}_{m,1}^\star,\ldots,\bar{T}_{m,B}^\star$, we define the (bootstrap) confidence interval for the parameter $ \theta (G)$ by 
\begin{align}\label{eqn:CI_mn}
	C_{m,n}(\alpha)& =\Big [\bar{\theta}_n - \frac{1}{\sqrt{n}}{q}_{1-\alpha}^\star,   \bar{\theta}_n - \frac{1}{\sqrt{n}}q^\star_{\alpha}\Big ] ~.
\end{align}
In the following discussion we will show that this interval  preserves  (for large $B$) $2\mu$-GDP  and has (for large $n,m;$ $m=O(n)$) the correct coverage $(1-2\alpha)$.
Note that for $m=n$ we obtain the bootstrap proposed by \cite{wang2022},  but based on our empirical results presented in Section \ref{sec:sim} we strongly recommend to use a bootstrap sample of a much smaller size than $n$. 
\\

\begin{algorithm}[ht]
	\begin{algorithmic}
		\State {\bf Input:} Data $X_1,\ldots,X_n$, estimator $\theta(\cdot)$ and its sensitivity $\Delta_\theta(\cdot)$, number of bootstrap replications $B$, privacy parameter $\mu$, size of the bootstrap sample $m$
		\State {\bf Output:} $\sqrt{2}\mu$-GDP of the estimator $\bar{\theta}_n$ and of $\bar{T}^\star_{m,1},\ldots,\bar{T}^\star_{m,B}$ 
		\State Compute the quantity  $\mu_B^*=\mu_B(m,n) $ defined in \eqref{det3} 
		\State Draw $ Z\sim N(0,\nicefrac{\Delta_\theta^2(n)}{\mu^2})$
		\State Compute $\bar{\theta}_n=\theta(X_1,\ldots,X_n) + Z$
		\For{$b=1\ldots,B$}
		\State Draw $X_1^\star,\ldots,X_m^\star$ uniformly from $X_1,\ldots,X_n$
		\State Draw $Y_b \sim N\left(0,\frac{\Delta_\theta^2(m)}{{\mu_B^\star}^2}\right)$
		\State Compute $\bar{\theta}^\star_{m,b}=\theta(X_1^\star,\ldots,X_m^\star) + Y_b$
		\State Compute $\bar{T}_{m,b}=\sqrt{m}(\bar{\theta}^\star_{m,b} - \bar{\theta}_n)$
		\EndFor
	\end{algorithmic}
	\caption{$\mu$-GDP $m$ out of $n$ Bootstrap }
	\label{algo:mn_dp_bootstrap}
\end{algorithm}

\paragraph{Privacy Guarantee.}
The privacy guarantee of \cref{algo:mn_dp_bootstrap} is given by the $B$-fold composition of the privacy guarantees of the bootstrap estimator. 
To obtain the privacy guarantee of one bootstrap estimator, consider two neighboring data sets that differ in $X_1$.
Then, $\bar{\theta}_{m}^\star$ in \eqref{eqn:bootstrap_private} is $i\mu_B^\star$-GDP if $X_1$ is included $i$ times in the bootstrap sample (note that, by \cref{thm:properties_f_DP},  a $\mu$-GDP mechanism is $k\mu$-GDP for groups of size $k$).
Denoting the probability of this event by $p_{m,i}=\binom{m}{i}\left(\frac{1}{n}\right)^i \left(1-\frac{1}{n}\right)^{m-i}$, the computation of one bootstrap estimator $\bar{\theta}_{m,b}^\star$ is given by a mixture of mechanisms, i.e. with probability $p_{m,i}$ the computation is $i\mu_B^\star$-GDP.
Therefore, the overall privacy guarantee of the $b$th bootstrap estimator is  
\begin{align}
	\label{det101}
	f_{B,b,boot_{m,n}} = C_{1-p_0}\Big (mix \Big (\Big  (\frac{p_{m,1}}{1-p_0},\ldots,\frac{p_{m,m}}{1-p_0}\Big ),(G_{\mu_B^\star},G_{2\mu_B^\star},\ldots,G_{m\mu_B^\star})\Big  )\Big )
\end{align}
where $ G_\mu $ is the tradeoff function defined in \eqref{det102},  
the function $mix(p,f)$ is the mixture of trade-off functions defined in \eqref{eqn:def-mix} in the appendix 
and $C_{1-p_0}$ is defined in \eqref{eqn:def-C_p} in the appendix. 
This privacy guarantee follows from \cref{thm:privacy_bootstrap_mn} in the Appendix.
By Theorem \ref{thm:properties_f_DP} the privacy of Algorithm \ref{algo:mn_dp_bootstrap} is given by $f_{B,1,boot_{m,n}}\otimes\ldots\otimes f_{B,B,boot_{m,n}}$.

\begin{thm} \label{thm:asymp_privacy_bootstrap_mn}
	Let $\mu \in (0,\infty)$ be a given constant, $\mu_B^\star$ as in \eqref{det3} and for $b=1,\ldots B$ let 
	$f_{B,b,boot_{m,n}}$ be as in \eqref{det101}.
	Then the privacy guarantee of \cref{algo:mn_dp_bootstrap} satisfies
	\[\lim_{B\to\infty}f_{B,1,boot_{m,n}}\otimes\ldots\otimes f_{B,B,boot_{m,n}} \geq G_\mu.\]
\end{thm}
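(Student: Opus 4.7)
The plan is to invoke the central limit theorem for $f$-differential privacy of \cite{dong2022gaussian}: the $B$-fold tensor composition $f_B^{\otimes B}$ of a trade-off function $f_B$ approaching the identity $Id$ converges pointwise to $G_\mu$, where $\mu^2 = \lim_{B\to\infty} B \cdot \kappa(f_B)$ for the appropriate second-order CLT functional $\kappa$ (normalized so that $\kappa(G_\mu) = \mu^2$). Since $f_{B,b,boot_{m,n}}$ does not depend on $b$ in \eqref{det101}, the $B$-fold composition reduces to $f_B^{\otimes B}$ with $f_B := f_{B,1,boot_{m,n}}$.

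\textbf{Closeness to identity.} By \eqref{det3}, $\mu_B^\star = \mu/\sqrt{B \cdot c(m,n)}$ with $c(m,n) := (1-(1-1/n)^m)(n+m-1)m/n^2$ independent of $B$, so $\mu_B^\star \to 0$. Each Gaussian component $G_{k\mu_B^\star}$ in the mixture therefore collapses to $Id$, and so does $f_B$ after the mixture and subsampling-amplification operators; the hypothesis of the CLT is satisfied.

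\textbf{Leading-moment computation.} For $G_{k\mu_B^\star}$ the CLT functional equals $k^2(\mu_B^\star)^2$ to leading order. Averaging over $k = 1,\ldots,m$ with weights $p_{m,k}/(1-p_0)$ gives the inner-mixture value $(\mu_B^\star)^2 \E[\mathrm{Bin}(m,1/n)^2]/(1-p_0)$, where $\E[\mathrm{Bin}(m,1/n)^2] = (m/n)(n+m-1)/n$. The subsampling-amplification operator $C_{1-p_0}$ contributes a $(1-p_0)^2$ factor at leading order. Combining and multiplying by $B$, the expression becomes $B(\mu_B^\star)^2 (1-p_0)(m/n)(n+m-1)/n$, which by the choice of $\mu_B^\star$ in \eqref{det3} equals exactly $\mu^2$.

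\textbf{Main obstacle and conclusion.} The central technical challenge is verifying the higher-moment (Lindeberg-type) conditions of the $f$-DP CLT in the presence of both a nontrivial mixture and a subsampling-amplification step, in order to guarantee that the limit is $G_\mu$ rather than some $G_{\mu'}$ with $\mu'<\mu$. \cite{wang2022} handled the analogous argument for $m = n$; extending to general $m$ requires reworking their moment estimates with the binomial weights $p_{m,k}$, which become increasingly concentrated near $k = 0$ when $m$ is much smaller than $n$. Once the higher-moment bounds are in place, the $f$-DP CLT yields $f_B^{\otimes B} \to G_\mu$ pointwise, proving the claim.
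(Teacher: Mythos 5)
Your route is exactly the paper's: reduce the $B$-fold composition to the $f$-DP central limit theorem of \cite{dong2022gaussian}, compute the second-moment functional of the mixture with binomial weights (your $\E[\mathrm{Bin}(m,1/n)^2]=(m/n)(n+m-1)/n$ is precisely the quantity appearing in the paper's evaluation of $kl(f_>)$ and $\kappa_2(f_>)$), multiply by the $(1-p_0)^2$ factor coming from $C_{1-p_0}$, and observe that the normalization \eqref{det3} makes the product equal $\mu^2$. That leading-order computation is correct and matches the paper's. However, what you label the ``main obstacle'' and defer is not a residual technicality --- it is the actual content of the proof, and leaving it unverified is a genuine gap. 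Concretely, three things are missing. First, Dong et al.'s CLT does not use a single normalized functional $\kappa$; it requires separate verification that $\sum_b kl \to K$, $\max_b kl \to 0$, $\sum_b \kappa_2 \to s^2$ and $\sum_b \kappa_3 \to 0$, with the limit being $G_{2K/s}$, so you must check that $kl$ and $\kappa_2$ agree at leading order (they do: $kl(f_>)$ and $\kappa_2(f_>)$ are both $\frac{m(n+m-1)}{n^2(1-p_0)}\mu_B^{\star 2}$ up to the factor $\tfrac12$ and $\Theta(\mu_B^{\star4})$ corrections, giving $2K/s=\mu$). Second, these identities for the mixture are not free: the paper obtains them by first representing $f_>$ as the trade-off function of two explicit Gaussian mixtures (Lemma 29 of \cite{wang2022}) and then applying Lemmas F.2 and F.3 of \cite{dong2022gaussian}; this also yields $\kappa_3(f_>)=\Theta(\mu_B^{\star3})$, which is what makes the third-moment condition hold, since $B\mu_B^{\star2}$ is constant and hence $B\mu_B^{\star3}\to 0$. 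Third, your claim that $C_{1-p_0}$ ``contributes a $(1-p_0)^2$ factor at leading order'' needs justification; the paper imports from \cite{wang2022} the limits $kl(C_{1-p_0}(f_>))\sim(1-p_0)^2 kl(f_>)$, $\kappa_2(C_{1-p_0}(f_>))\sim(1-p_0)^2\kappa_2(f_>)$ and, crucially, $\kappa_3(C_{1-p_0}(f_>))\sim(1-p_0)^3\kappa_3(f_>)$, and without the last one the Lindeberg-type condition is not established.

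One further point: your worry that the binomial weights ``become increasingly concentrated near $k=0$ when $m$ is much smaller than $n$'' overstates the difficulty for this theorem. Here $m$ and $n$ are held fixed and only $B\to\infty$, so the mixture weights $p_{m,k}/(1-p_0)$ (which exclude $k=0$ by construction) do not drift at all; the only vanishing quantity is $\mu_B^\star=\Theta(B^{-1/2})$, and once the mixture-of-Gaussians representation is in hand the moment conditions follow from the scalings above rather than from any delicate reworking of the binomial estimates. The regime where $m,n$ vary with $B$ is relegated in the paper to a remark and is not needed for the stated claim.
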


\begin{rem} ~~~
	\begin{itemize}
		\item[(1)] 
		\cref{thm:asymp_privacy_bootstrap_mn} can be generalized to sequences $\{\mu_B^\star\}_{B=1}^\infty$ with 
		$\mu_B^\star\in (0,\infty)$ and $$
		\lim_{B\to\infty} \mu_B^\star \sqrt{B}\sqrt{\left(1-\left(1-\frac{1}{n}\right)^m\right)\frac{m}{n}\left(1+\frac{m-1}{n}\right)}\to \mu.
		$$
		\item[(2)]  
		Note that $\mu_B^\star>\mu$ for some choices of $B$, $m$ and $n$. 
		It might seem counterintuitive that a composition of less private mechanisms actually results in a higher privacy level.
		However, this occurs from using smaller bootstrap samples which leads to perfect privacy with high probability.
		To see this, consider the bootstrap estimator $\hat{\theta}^\star_m$ in \eqref{eqn:bootstrap_np}, where the bootstrap sample is obtained from two neighboring datasets $S$ and $S'$ that differ in the first entry. 
		Given that this first entry is not included in the bootstrap sample, the distribution of $\hat{\theta}^\star_m$ is the same on both data sets - hence there is perfect privacy.
		Denote by $p_0=(1-1/n)^m$ the probability of this event and note that $p_0$ is close to one for suitable $m$ and $n$.
		Therefore, the computation of one bootstrap sample ensures  perfect privacy with high probability and the $B$-fold composition of almost perfectly private mechanisms then converge to the stated $\mu$-GDP limit.
	\end{itemize}
\end{rem}

\paragraph{Consistency.} For the private $m$ out of $n$ bootstrap to be consistent, we basically require two assumptions.
First, the non-private $m$ out of $n$ bootstrap needs to be consistent, that is
\begin{align}
	\label{det4}
	\sup_x |P(\sqrt{n}(\hat{\theta}_n-\theta)\leq x) - P(\sqrt{m}(\hat{\theta}_m^\star-\hat{\theta}_n)\leq x|X_1,\ldots,X_n)|\overset{P}{\to} 0
\end{align}
as $n\to \infty$ and $m=m(n)\to \infty$.
Here 
$P(A|X_1,\ldots,X_n) $
denotes the conditional probability of the event $A$ given $X_1, \ldots , X_n$ (note that $P(A|X_1,\ldots,X_n)$ is a random variable) and  the symbol
${\cal A}= P(A|X_1,\ldots,X_n) \overset{P}{\to} 0$    means convergence in probability, that is  $\lim_{n\to \infty} P \big ( {\cal A} > t \big )=0$ for any $t>0$.   This is a very mild assumption and satisfied for  many statistics  of interest, see for example \cite{bickel1997resampling}.
Second, the noise added due to privacy converges to zero in probability fast enough.
This assumption is satisfied for   the Gaussian mechanism if  the statistic $\hat \theta_n$  has a sensitivity of order $1/n$.

\begin{thm}\label{thm:asymp_consistency_bootstrap_mn}
	Let $\hat{\theta}_n$ be a test statistic for which the non-private $m$ out of $n$ bootstrap 
	is consistent, as specified by \eqref{det4}.
	Further assume that $m=O(n)$, that the limiting distribution of $\sqrt{n}(\hat{\theta}_n-\theta)$ is continuous and that the sensitivity of $\hat{\theta}_n$  satisfies  $\Delta_\theta(n)=O(1/n)$.
	If $B=O(n)$ and $B\to \infty$ then  the $\sqrt{2}\mu$-GDP $m$ out of $n$ bootstrap 
	defined in \cref{algo:mn_dp_bootstrap}
	is consistent, that is 
	\[\sup_x \left\lvert P(\sqrt{n}(\bar{\theta}_n-\theta)\leq x) - \frac{1}{B}\sum_{i=1}^B \mathbb{I}\{\sqrt{m}(\bar{\theta}_m^{\star(i)}-\bar{\theta}_n)\leq x\}\right\rvert\overset{P}{\to} 0.\]
\end{thm}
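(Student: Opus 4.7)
The approach is a triangle inequality tailored to the private bootstrap, reducing the supremum to four pieces that each match a hypothesis or a standard ingredient. Set $F_n(x):=P(\sqrt{n}(\bar\theta_n-\theta)\leq x)$, $\tilde F_n(x):=P(\sqrt{n}(\hat\theta_n-\theta)\leq x)$, $H^\star_{m,n}(x):=P(\sqrt{m}(\hat\theta_m^\star-\hat\theta_n)\leq x\mid X_1,\ldots,X_n)$, $\bar H^\star_{m,n}(x):=P(\sqrt{m}(\bar\theta_m^\star-\bar\theta_n)\leq x\mid X_1,\ldots,X_n,Z)$ and $\hat F_B(x):=\frac{1}{B}\sum_{i=1}^B \mathbb{I}\{\sqrt{m}(\bar\theta_m^{\star(i)}-\bar\theta_n)\leq x\}$. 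Then $\sup_x|F_n-\hat F_B|$ is bounded by $\sup_x|F_n-\tilde F_n|+\sup_x|\tilde F_n-H^\star_{m,n}|+\sup_x|H^\star_{m,n}-\bar H^\star_{m,n}|+\sup_x|\bar H^\star_{m,n}-\hat F_B|$, and I would treat the four terms separately.

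The first term vanishes because $\sqrt{n}(\bar\theta_n-\hat\theta_n)=\sqrt{n}Z\sim N(0,l^2/(n\mu^2))\to 0$ in probability, using $\Delta_\theta(n)=O(1/n)$; combined with the continuity of the limit of $\sqrt{n}(\hat\theta_n-\theta)$, Slutsky and Polya's uniform convergence theorem yield the required uniform bound. The second term is \emph{exactly} assumption \eqref{det4}. The fourth term is a conditional Dvoretzky--Kiefer--Wolfowitz estimate: given $(X_1,\ldots,X_n,Z)$ the $B$ bootstrap replicates are iid with CDF $\bar H^\star_{m,n}$, so $\sup_x|\bar H^\star_{m,n}-\hat F_B|=O_P(1/\sqrt{B})=o_P(1)$, and integration over the conditioning variables transfers this to unconditional $o_P(1)$.

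The delicate summand is the third. Conditionally on $(X_1,\ldots,X_n,Z)$ one has $\sqrt{m}(\bar\theta_m^\star-\bar\theta_n)=\sqrt{m}(\hat\theta_m^\star-\hat\theta_n)+\sqrt{m}Y-\sqrt{m}Z$, with $Y\sim N(0,\sigma_{m,B}^2)$ independent of the rest. The (given $Z$) shift satisfies $|\sqrt{m}Z|=O_P(\sqrt{m}/n)=o_P(1)$ under $m=O(n)$ and $\Delta_\theta(n)=O(1/n)$. Under the rate conditions of the theorem and the explicit form \eqref{det5}--\eqref{det3}, one argues that the convolution of $H^\star_{m,n}$ with $N(0,m\sigma_{m,B}^2)=N(0,l^2/(\mu_B^\star)^2)$ is asymptotically negligible; combined with the continuity of the limit CDF (inherited from I and II) and Polya's theorem applied conditionally, this yields the third supremum $\overset{P}{\to}0$.

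The main obstacle is controlling term III. Summands I, II, IV reduce either to a hypothesis of the theorem or to a one-line standard argument, but III forces one to simultaneously control the shift $\sqrt{m}Z$, the convolution with $N(0,m\sigma_{m,B}^2)$, and the randomness of the data-dependent CDF $H^\star_{m,n}$. The balancing act is between the per-iteration noise variance $l^2/(\mu_B^\star)^2$, which grows in $B$ via the composition \eqref{det3}, and the required growth of $B$ for DKW in term IV; the rate condition $B=O(n)$ together with $m=O(n)$ and $\Delta_\theta(n)=O(1/n)$ is precisely what keeps these three competing effects mutually compatible.
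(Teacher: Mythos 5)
Your decomposition is exactly the one used in the paper: the same four terms (non-private bootstrap consistency by assumption \eqref{det4}, vanishing of $\sqrt{n}Z$, vanishing of $\sqrt{m}Z$ and $\sqrt{m}Y$ for the conditional bootstrap law, and a conditional Dvoretzky--Kiefer--Wolfowitz bound of order $1/\sqrt{B}$ for the Monte Carlo error), handled in the same way, so your proposal matches the paper's proof in both structure and substance. Only a small algebraic slip: $m\sigma_{m,B}^2=l^2/(m\,\mu_B^{\star 2})$, not $l^2/\mu_B^{\star 2}$, which is harmless since the correct value is even smaller.
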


\begin{cor}\label{cor:CI-correct}
	If  the assumptions of \cref{thm:asymp_consistency_bootstrap_mn}  are satisfied, the interval $C_{m,n}(\alpha)$ defined in \eqref{eqn:CI_mn} is asymptotically $\sqrt{2}\mu$-GDP and it is an asymptotic $(1-2\alpha)$-confidence interval, that is 
	\begin{equation*}
		\lim_{n\to \infty} P(\theta \in C_{m,n}(\alpha)) = 1-2\alpha ~.
	\end{equation*}
\end{cor}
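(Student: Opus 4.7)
I would split \cref{cor:CI-correct} into its two assertions---asymptotic $\sqrt 2\mu$-GDP of $C_{m,n}(\alpha)$ and asymptotic coverage $1-2\alpha$---and treat them separately.

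For the privacy assertion, $C_{m,n}(\alpha)$ is a deterministic function of the two released quantities $\bar\theta_n$ in \eqref{eqn:gdp_esti} and the bootstrap sample $(\bar T^\star_{m,1},\dots,\bar T^\star_{m,B})$. The first release is $\mu$-GDP by \cref{thm:gauss_mech}, and the second is asymptotically $\mu$-GDP by \cref{thm:asymp_privacy_bootstrap_mn}. Composition (part 2 of \cref{thm:properties_f_DP}) then gives that the joint release is asymptotically $\sqrt{\mu^2+\mu^2}=\sqrt 2\mu$-GDP, and the post-processing property (part 1 of \cref{thm:properties_f_DP}) transfers this guarantee through the empirical quantile operation to $C_{m,n}(\alpha)$.

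For the coverage I would pivot the interval. Rearranging the endpoints,
$$\theta\in C_{m,n}(\alpha)\ \Longleftrightarrow\ q^\star_\alpha\le \sqrt n(\bar\theta_n-\theta)\le q^\star_{1-\alpha}.$$
Writing $\bar\theta_n=\hat\theta_n+Z$ with $Z\sim N(0,l^2/(n^2\mu^2))$, the additive noise satisfies $\sqrt n\,Z\overset{P}{\to}0$ under the assumption $\Delta_\theta(n)=O(1/n)$, so by Slutsky the pivot $\sqrt n(\bar\theta_n-\theta)$ converges in distribution to the same continuous limit $F$ as $\sqrt n(\hat\theta_n-\theta)$. \cref{thm:asymp_consistency_bootstrap_mn} shows that the empirical bootstrap CDF $\hat F^\star_B(x)=\frac1B\sum_{b=1}^B\mathbb{I}\{\bar T^\star_{m,b}\le x\}$ satisfies $\sup_x|F_n(x)-\hat F^\star_B(x)|\overset{P}{\to}0$, where $F_n$ is the CDF of the pivot; combining this with the uniform convergence $F_n\to F$ (which follows from continuity of $F$) yields $\sup_x|\hat F^\star_B(x)-F(x)|\overset{P}{\to}0$. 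Continuity of $F$ at $F^{-1}(\alpha)$ and $F^{-1}(1-\alpha)$ then promotes this to quantile convergence, $q^\star_\alpha\overset{P}{\to}F^{-1}(\alpha)$ and $q^\star_{1-\alpha}\overset{P}{\to}F^{-1}(1-\alpha)$. A Slutsky-type argument applied to the pivot and the random endpoints finally gives $P\bigl(q^\star_\alpha\le \sqrt n(\bar\theta_n-\theta)\le q^\star_{1-\alpha}\bigr)\to F(F^{-1}(1-\alpha))-F(F^{-1}(\alpha))=1-2\alpha$.

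The main obstacle is the Slutsky-type step at the end. The statistic $\sqrt n(\bar\theta_n-\theta)$ and the random endpoints $q^\star_\alpha,q^\star_{1-\alpha}$ are built from the same dataset and share the same added privatization noise, so they are not independent. One has to verify that convergence in distribution of the pivot together with convergence in probability of the endpoints to points of continuity of $F$ is enough to identify the limiting probability as $F(F^{-1}(1-\alpha))-F(F^{-1}(\alpha))$; this follows from the Portmanteau theorem applied to the differences pivot$\,-\,$endpoint, but is the only step that requires genuine care. Everything else reduces to invoking the already-established composition/post-processing rules for $f$-DP and the Glivenko--Cantelli-style quantile transfer from \cref{thm:asymp_consistency_bootstrap_mn}.
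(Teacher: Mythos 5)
Your argument is correct and coincides with the paper's (implicit) reasoning: the paper states \cref{cor:CI-correct} without a separate proof, treating it exactly as you do --- privacy by composing the $\mu$-GDP release of $\bar\theta_n$ with the asymptotically $\mu$-GDP bootstrap output (\cref{thm:asymp_privacy_bootstrap_mn} together with \cref{thm:properties_f_DP}) and invoking post-processing, and coverage by pivoting the interval and transferring the uniform convergence in \cref{thm:asymp_consistency_bootstrap_mn} to the empirical bootstrap quantiles, with your final Slutsky step valid despite the dependence because the endpoints only need to converge in probability to constants. The single piece of fine print (shared with the paper, whose assumption is only continuity of the limit $F$) is that quantile convergence $q^\star_\alpha\overset{P}{\to}F^{-1}(\alpha)$ requires $F$ to be strictly increasing at $F^{-1}(\alpha)$ and $F^{-1}(1-\alpha)$; with that standard proviso your proof is complete.
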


\paragraph{Choice of $m$ and $B$.}
We provide a rule of thumb for the choice of $m$ and $B$ by analyzing the length of the confidence interval in \eqref{eqn:CI_mn}.
In many cases, the distribution of  $\sqrt{n}(\hat{\theta}_n-\theta)$ can be approximated by a normal distribution with variance $\sigma_\theta^2$ for some $\sigma_\theta^2>0$. In this case, the $\mu$-GDP estimator in \eqref{eqn:gdp_esti} is approximately normal distributed as well, i.e.
\[\sqrt{n}(\bar{\theta}_n-\theta)\overset{approx}{\sim} N(0,\sigma_\theta^2 + n\sigma_\mu^2) ~.\]
Therefore, an asymptotic $(1-2\alpha)$-CI for $\theta$ using $\bar{\theta}_n$ is given by 
\[C(\alpha)=\left[\bar{\theta}_n - q_{1-\alpha}\frac{\sqrt{\sigma_\theta^2 + n\sigma_\mu^2}}{\sqrt{n}}, 
\bar{\theta}_n - q_{\alpha}\frac{\sqrt{\sigma_\theta^2 + n\sigma_\mu^2}}{\sqrt{n}}\right]
\]
where $q_\alpha$ denotes the $\alpha$ quantile of a standard normal distribution.
Using the expansion $\sqrt{1+x}=1+\frac{x}{2} + O(x^2)$ for $x=o(1)$, its length is of order $O(\frac{\sigma_\theta + n\sigma_\mu^2/(2\sigma_\theta)}{\sqrt{n}})$.
By \eqref{det4} it follows 
that, given $X_1,\ldots, X_n,Z$, the conditional distribution of the bootstrap statistic  $ \sqrt{m}(\bar{\theta}_m^\star - \bar{\theta}_n)$
can be approximated by  a normal distribution, as well, i.e.
\[
\sqrt{m}(\bar{\theta}_m^\star - \bar{\theta}_n)=\sqrt{m}(\bar{\theta}_m^\star - \hat{\theta}_n - Z)\overset{approx}{\sim } N(-\sqrt{m}Z, \sigma_\theta^2 + m\sigma_{m,B}^2) ~,
\]
where $\sigma_{m,B}^2$ is defined in \eqref{det5}.
Therefore, the confidence interval \eqref{eqn:CI_mn} is approximately equal to 
\begin{align*}
	C_{m,n}(\alpha)\approx 
	&\left[\bar{\theta}_n - \frac{-\sqrt{m}Z +q_{1-\alpha}\sqrt{\sigma_\theta^2 + m\sigma_{m,B}^2}}{\sqrt{n}}, 
	\bar{\theta}_n - \frac{-\sqrt{m}Z+q_{\alpha}\sqrt{\sigma_\theta^2 + m\sigma_{m,B}^2}}{\sqrt{n}}\right].
\end{align*}
Its length is of order $O(\frac{\sigma_\theta+m\sigma_{m,B}^2/(2\sigma_\theta)}{\sqrt{n}})$. 
Plugging in the definitions of $\sigma_\mu^2$ and $\sigma_{m,B}^2$, we obtain the following lengths of the different CIs (up to order $\frac{1}{\sqrt{n}n}$):
\\
\begin{table}[H]
	\centering
	\begin{tabular}{l | c| c | c}
		CI: & asymptotic normal & $n$ out of $n$ bootstrap & $m$ out of $n$ bootstrap \\ 
		& $C(\alpha)$& $C_{n,n}(\alpha)$& $C_{m,n}(\alpha)$ \\     \hline 
		&&&   \\
		length: &$\dfrac{\sigma_\theta}{\sqrt{n}} + \dfrac{1}{2\sigma_\theta}\dfrac{l^2 }{\sqrt{n}n\mu^2}$ 
		& $\dfrac{\sigma_\theta}{\sqrt{n}} + \dfrac{1}{2\sigma_\theta}\dfrac{2(1-(1-\frac{1}{n})^n)Bl^2}{\sqrt{n}n\mu^2}$ 
		&$ \dfrac{\sigma_\theta}{\sqrt{n}} + \dfrac{1}{2\sigma_\theta}\dfrac{(1-(1-\frac{1}{n})^m)Bl^2}{\sqrt{n}n\mu^2} $
	\end{tabular}
\end{table}
\noindent The lengths of the different confidence intervals differ by the second order term.
For the confidence interval obtained by the $n$ out of $n$ bootstrap we see that its length is always larger than the one obtained from the asymptotic normal distribution, since $2B(1-(1-1/n)^n)>1$ for reasonable choices of $B$. 
This is different for the $m$ out of $n$ bootstrap: choosing
\begin{equation} \label{eqn:choose_m}
	m=\frac{\log(1-1/B)}{\log(1-1/n)}\approx \frac{n}{B}
\end{equation}
(or, equivalently, $B=1/(1-(1-1/n)^m)$ ) leads to the same rate of the second term in the length of the $\mu$-GDP asymptotic confidence interval and the confidence intervals obtained by $m$ out or $n$ bootstrap.
This gives a rule of thumb how to choose $m$ in dependence of $B$.
Note that $B$ determines how good the empirical bootstrap quantile approximates the quantile of the bootstrap distribution and has to be chosen sufficiently large, depending on the quantile of interest. Once this choice has been made, $m$ can be chosen according to  \eqref{eqn:choose_m}.

\paragraph{Computational time.} 
The computational time of the $m$ out of $n$ bootstrap and the $n$ out of $n$ bootstrap is $B$ times the computational time of the estimator $\theta(G_m)$ and $\theta(G_n)$, respectively.
Especially for estimators that are computational intensive, the $m$ out out of $n$ bootstrap with small values for $m$ significantly decreases the computational time in comparison to the $n$ out of $n$ bootstrap.
We will demonstrate this advantage in the following Section \ref{sec:sim} and in Section \ref{sec:sim:addition} of the appendix.

\section{Simulations}
\label{sec:sim}

In this section we evaluate the finite sample performance of the $m$ out of $n$ bootstrap proposed in this paper and compare it with two alternative procedures which have recently been proposed in the literature. 
The first one is the $n$ out of $n$ bootstrap suggested  in \cite{wang2022}. 
These authors propose an additional deconvolution step on the bootstrap estimators $\bar{\theta}_n^\star=\hat{\theta}_n^\star + Y$ to obtain a more accurate estimator of the distribution of $\hat{\theta}$ (note that the distribution of $Y$ is known).
While such a deconvolution is in general possible as well for the $m$ out of $n$ bootstrap, we do not do this here since the obtained simulation results are already good.
This is due to the fact that the additional variance due to privacy in the bootstrap iterations is smaller than in the $n$ out of $n$ bootstrap and therefore the approximation by the bootstrap ecdf is more accurate.
Additionally, by omitting such a deconvolution step, on the one hand we save computational time and on the other hand we circumvent the choice of additional hyperparameters for deconvolution.

The second procedure is BLBQuant which is introduced by  \cite{chadha2024resampling}  and is described in detail in Appendix \ref{sec:appendix-blb}.
The privacy guarantees for BLBQuant are stated in terms of $(\epsilon,\delta)$-DP and can be converted to $\mu$-GDP guarantees by \cref{lem:dp_to_gdp}.
We consider two different scenarios:
\begin{enumerate}
	\item Confidence intervals for the mean of a truncated normal distribution based on synthetic data in \cref{sec:sim:mean}
	\item Confidence intervals for the minimizer of a regularized logistic regression with data from the {\it 2016 Census public use microdata file (PUMF) on individuals} in \cref{sec:sim:log-reg}
\end{enumerate}

The finite sample performance is evaluated based on the empirical coverage of $500$ confidence intervals obtained from the different methods. 
Furthermore, we state the average length of the confidence intervals and the computational time.\\
Additional simulations for confidence intervals for the minimizer of a regularized logistic regression with synthetic data are stated in \cref{sec:sim:addition} in the appendix.

\subsection{Mean of a truncated normal distribution}\label{sec:sim:mean}
We consider a standard normal distribution truncated on the interval $[-5,5]$.
The  $\mu/\sqrt{2}$-GDP  estimate of the mean ($\theta =0$)  is given by  
\[
\bar{\theta}_n = \frac{1}{n}\sum_{i=1}^n X_i + Z,
\]
where $Z\sim N(0,\frac{2\cdot10^2}{n^2\mu^2})$. 
An additional privacy budget of $\mu/\sqrt{2}$ is used for the computation of the bootstrap replications. 
Therefore, by 
\cref{thm:properties_f_DP,cor:CI-correct}, 
the  interval $C_{m,n}(\alpha)$ defined in \eqref{eqn:CI_mn} is an asymptotic $1-2\alpha$ $\mu$-GDP confidence interval for the mean. 
We compare the following simulation set ups for $\alpha=0.05$, $n=500,1000,5000$ and $\mu=0.5, 1$. 
\begin{itemize}
	\item The confidence interval $C_{m,n}(\alpha)$
	defined in \eqref{eqn:CI_mn} which is obtained by the $m$ out of $n$ bootstrap and where $m=m(B)$ is chosen according to the rule \eqref{eqn:choose_m}  for different values $B$.
	\item The confidence interval $C_{n,n}(\alpha)$ defined in \eqref{eqn:CI_mn} (for $m=n$) which is obtained by the $n$ out of $n$ bootstrap where $B=\mu^2n$ as proposed by \cite{wang2022}.
	\item The $(\epsilon,\delta) $-DP confidence intervals obtained by  BLBQuant (see Appendix \ref{sec:appendix-blb} for details), where   $\delta=1/n,1/n^2$ and   $\epsilon$ is chosen such that $\delta(2\epsilon,\mu)=\delta$, see \cref{lem:dp_to_gdp}. The  
	hyperparameters in  BLBQuant are chosen as proposed in \cite{chadha2024resampling} and the mean is computed by the Gauss Mechanism, i.e. 
	$\bar{\theta}_{\epsilon,\delta}= \frac{1}{n}\sum_{i=1}^n X_i + Z,$
	where $Z\sim N(0,10^2/(n^2\tilde{\mu}^2))$ and $\tilde{\mu}$ is such that $\delta(\epsilon,\tilde{\mu})=\delta$ to guarantee $(\epsilon,\delta)$-DP.
\end{itemize}
For each method, we state the empirical coverage and average length based on $500$ confidence intervals and also give the average computation time.\\
In \cref{tab:sim-mn-bootstrap} we display the simulation results for the $m$ out of $n$ (white cells) and $n$ out of $n$ (gray cells) bootstrap. 
We observe that the confidence intervals obtained by the $m$ out of $n$ bootstrap are by a factor $10$ shorter than the ones obtained by $n$ out of $n$ bootstrap while having the desired coverage for $n\geq 1000$. 
Further note that the $n$ out of $n$ bootstrap takes more computational time for a comparable $B$. 
For example, if $\mu =0.5$, $n=5000$ and $B=1000$, the computational time of the $n$ out of $n$ bootstrap is by at least a factor $10$ 
larger than for the $m$ out of $n$ bootstrap and  the coverage probabilities of the $m$ out of $n$ bootstrap are closer to the nominal  level $90\%$ for all choices of $m$ (the lengths of all confidence intervals are comparable).
Note that the rule of thumb \eqref{eqn:choose_m} leads to way smaller choices of $m$ than one would choose in the non-private case.
That the performance of the obtained confidence intervals is still good is due to the fact that there is additional noise introduced to ensure privacy and $m$ is chosen in a way such that the bootstrap estimators $\bar{\theta}_m^\star$ replicate the underlying distribution well. \\
In \cref{tab:sim-blb} we display the confidence intervals obtained from BLBQuant.
We observe that the confidence intervals are too wide and wider than the ones obtained by the $m$ out of $n$ bootstrap.
Furthermore, the computational time is larger which is caused by the resampling and privatization method used in BLBQuant.

\begin{table}[H]
	\centering
	\begin{minipage}{.49\linewidth}
		\begin{tabular}{rrr rrr}
			cov & len & time  & $m$ & $n$ & $B$\\
			\hline \hline
			0.884 & 0.232 & 0.005  &    5 &  500 &  100 \\
			0.886 & 0.236 & 0.023  &    1 &  500 &  500 \\ 
			\rowcolor{lightgray}
			1.000 & 2.301 & 0.010  &  500 &  500 &  125 \\
			\hline
			0.894 & 0.136 & 0.004  &   10 & 1000 &  100 \\
			0.900 & 0.139 & 0.021  &    2 & 1000 &  500 \\
			0.902 & 0.139 & 0.042  &    1 & 1000 & 1000 \\
			\rowcolor{lightgray}
			1.000 & 1.640 & 0.027  & 1000 & 1000 &  250 \\
			\hline
			0.856 & 0.049 & 0.006  &   50 & 5000 &  100 \\ 
			0.890 & 0.050 & 0.023  &   10 & 5000 &  500 \\ 
			0.894 & 0.050 & 0.045  &    5 & 5000 & 1000 \\ 
			\rowcolor{lightgray}
			1.000 & 0.661 & 0.501  & 5000 & 5000 & 1000 \\
		\end{tabular}
		\subcaption{$\mu=0.5$}
	\end{minipage} 
	\begin{minipage}{.49\linewidth}
		\begin{tabular}{rrr rrr}
			cov & len & time  & $m$ & $n$ & $B$\\
			\hline \hline
			0.860 & 0.169 & 0.005 &    5 &  500 &  100 \\ 
			0.878 & 0.172 & 0.022 &    1 &  500 &  500 \\ 
			\rowcolor{lightgray}
			1.000 & 2.325 & 0.035 &  500 &  500 &  500 \\
			\hline 
			0.896 & 0.110 & 0.005 &   10 & 1000 &  100 \\ 
			0.906 & 0.113 & 0.022 &    2 & 1000 &  500 \\ 
			0.868 & 0.113 & 0.043 &    1 & 1000 & 1000 \\ 
			\rowcolor{lightgray}
			1.000 & 1.653 & 0.105 & 1000 & 1000 & 1000 \\ 
			\hline
			0.870 & 0.046 & 0.005 &   50 & 5000 &  100 \\ 
			0.898 & 0.047 & 0.023 &   10 & 5000 &  500 \\
			0.896 & 0.047 & 0.045 &    5 & 5000 & 1000 \\ 
			\rowcolor{lightgray}
			1.000 & 0.332 & 0.487 & 5000 & 5000 & 1000 
		\end{tabular}
		\subcaption{$\mu=1$}
	\end{minipage}
	\caption{Simulated coverage and length of $90\%$ confidence intervals
		for the mean of a truncated normal distribution.
		White cells: the interval $C_{m,n}(\alpha)$ defined in \eqref{eqn:CI_mn},   where $m$ is chosen by \eqref{eqn:choose_m} and  different values of $B$ are considered. 
		Gray cells: confidence intervals obtained by $n$ out of $n$ bootstrap, where $B$ is chosen by the rule of thumb in \cite{wang2022}.}
	\label{tab:sim-mn-bootstrap}
\end{table}

\begin{table}[H]
	\centering
	\begin{minipage}{0.49\linewidth}
		\begin{tabular}{rr rr r l}
			cov &   len &   time &  $n$ & $\epsilon$ & $\delta$\\
			\hline \hline
			0.982 & 0.422 &  4.126 &  500 & 1.234 & $1/n$  \\
			0.984 & 0.480 &  2.651 &  500 & 2.101 &  $1/n^2$  \\ 
			\hline
			0.984 & 0.230 &  6.917 & 1000 & 1.352 & $1/n$  \\
			0.996 & 0.255 &  3.953 & 1000 & 2.254 & $1/n^2$  \\
			\hline
			0.948 & 0.063 & 13.962 & 5000 & 1.600 &  $1/n$  \\
			0.966 & 0.067 &  8.737 & 5000 & 2.579 & $1/n^2$
		\end{tabular}
		\subcaption{$\mu=0.5$}
	\end{minipage}
	\begin{minipage}{0.49\linewidth}
		\begin{tabular}{rr rr r l} 
			cov &   len &   time &  $n$ & $\epsilon$ & $\delta$\\
			\hline \hline
			0.958 & 0.245 &  1.877 &  500  & 2.912 &  $1/n$  \\
			0.976 & 0.267 &  1.422 &  500  & 4.586 & $1/n^2$  \\
			\hline
			0.948 & 0.144 &  2.675 & 1000  & 3.139 &  $1/n$ \\
			0.962 & 0.154 &  1.871 & 1000  & 4.887 & $1/n^2$  \\
			\hline
			0.928 & 0.051 &  5.957 & 5000  & 3.616 &  $1/n$ \\
			0.918 & 0.052 &  3.814 & 5000  & 5.523 & $1/n^2$  \\
		\end{tabular}\subcaption{$\mu=1$}
	\end{minipage}
	\caption{Simulated coverage and length of $90\%$ confidence intervals for the mean of a truncated normal distribution obtained by BLBQuant for $B=500$, where different values of $\delta$ are considered (see Appendix \ref{sec:appendix-blb} for details).}
	\label{tab:sim-blb}
\end{table}

\subsection{Regularized logistic regression}
\label{sec:sim:log-reg}
For the sake of comparison with  \cite{wang2022} we consider regularized logistic regression on the {\it 2016 Census public use microdata file (PUMF) on individuals}  \footnote{Downloaded from \url{https://www150.statcan.gc.ca/n1/pub/98m0001x/index-eng.htm} on August, 4, 2024} which contains social, demographic and economic information about individuals living in Canada. 
Here we consider the variables SHELCO, which gives the monthly shelter cost of a household, and MRKINC, which is the income before transfers and taxes, of the people living in Ontario. 
In a pre-processing step, all unavailable data is removed and both variables are scaled to the interval $[0,1]$. 
The covariate is chosen as $x_i=(1/\sqrt{2},\textnormal{MRKINC}/\sqrt{2})$ and the response as $y_i=1$, if SHELCO$\geq 0.5$ and $y_i=-1$ otherwise.
The estimator of interest is 
\[\hat{\theta}_n = \argmin{\theta\in\mathbb{R}^2} -\frac{1}{n}\sum_{i=1}^n \log\left(\frac{1}{1+\exp(-\theta^\top x_iy_i)}\right) +  \lVert \theta\rVert_2^2,\]
which is privatized by the Gaussian mechanism to ensure $\mu$-GDP
\[\bar{\theta}_n = \hat{\theta}_n + Z,\]
where $Z\sim N(0,1/(n\mu)^2)$. 
In each simulation run, we draw $n=500,1000,5000$ samples with replacement from the data set and compute the confidence interval based on that sample. 
The empirical coverage is evaluated based on the  minimizer over the whole data set.
Again, we compare three different methods:
\begin{itemize}
	\item The confidence interval $C_{m,n}(\alpha)$
	defined in \eqref{eqn:CI_mn} which is obtained by the $m$ out of $n$ bootstrap, where $m=m(B)$ is chosen according to \eqref{eqn:choose_m}  for different values $B$.
	\item The confidence interval $C_{n,n}(\alpha)$ defined in \eqref{eqn:CI_mn} (for $m=n$) which is obtained by the $n$ out of $n$ bootstrap, where $B=\mu^2n$ as proposed by \cite{wang2022}.
	\item The $(\epsilon,\delta) $-DP confidence intervals obtained by  BLBQuant (see Appendix \ref{sec:appendix-blb} for details), where   $\delta=1/n,1/n^2$ and   $\epsilon$ is chosen such that $\delta(2\epsilon,\mu)=\delta$, see \cref{lem:dp_to_gdp}. The  
	hyperparameters in  BLBQuant are chosen as proposed in \cite{chadha2024resampling} and the private minimizer is computed by the Gaussian Mechanism, i.e. 
	$\bar{\theta}_{\epsilon,\delta}= \hat{\theta}_n + Z$,
	where $Z\sim N(0,1/(n\tilde{\mu})^2 )$ and $\tilde{\mu}$ is such that $\delta(\epsilon,\tilde{\mu})=\delta$ to guarantee $(\epsilon,\delta)$-DP. 
\end{itemize}

\begin{table}[h!]
	\centering
	\begin{subtable}{\textwidth} 
		\centering
		\begin{tabular}{rr rr r rrr}
			cov $\theta_0$ & cov $\theta_1$&  len $\theta_0$ &  len $\theta_1$& time & $m$& $B$ & $n$\\ \hline \hline
			0.844  & 0.862  & 0.00124  & 0.000818   & 0.0285& 5 & 100 & 500 \\ 
			0.87  & 0.87 & 0.00126 & 0.000833  & 0.111 & 1 & 500 & 500 \\ 
			\rowcolor{lightgray}
			0.522 & 0.078& 0.000942  & 0.000116 & 0.165 & 500 & 125 & 500 \\ 
			\hline
			0.894 & 0.844  & 0.000549   & 0.000292  & 0.0279  & 10 & 100 & 1000 \\ 
			0.88   & 0.894 & 0.000551  & 0.000299  & 0.114 & 2 & 500 & 1000 \\ 
			0.876  & 0.898 & 0.000544  & 0.000299 & 0.221  & 1 & 1000 & 1000 \\ 
			\rowcolor{lightgray}
			0.716  & 0.118  & 0.000478  & 5.95$\times 10^{-5}$  & 0.58 & 1000 & 250 & 1000 \\ 
			\hline
			0.86  & 0.874 & 9.75$\times 10^{-5}$  & 2.81$\times 10^{-5}$  & 0.0429 & 50 & 100 & 5000 \\ 
			0.892   & 0.89   & 9.93$\times 10^{-5}$  & 2.86$\times 10^{-5}$ & 0.138 & 10 & 500 & 5000 \\ 
			0.884  & 0.884 & 9.9$\times 10^{-5}$  & 2.87$\times 10^{-5}$  & 0.246 & 5 & 1000 & 5000 \\ 
			\rowcolor{lightgray}
			0.866  & 0.306 & 9.64$\times 10^{-5}$  & 1.2$\times 10^{-5}$  & 12.9 & 5000 & 1000 & 5000 
		\end{tabular}
		\caption{$\mu=0.5$}
	\end{subtable}
	\begin{subtable}{\textwidth}
		\centering 
		\begin{tabular}{rr rr r rrr}
			cov $\theta_0$ & cov $\theta_1$&  len $\theta_0$ &  len $\theta_1$& time & $m$& $B$ & $n$\\ \hline \hline
			0.882  & 0.884  & 0.00102  & 0.000418  & 0.0322  & 5 & 100 & 500 \\ 
			0.808  & 0.87  & 0.00096  & 0.000428 & 0.144 & 1 & 500 & 500 \\ 
			\rowcolor{lightgray}
			0.782  & 0.18  & 0.000961  & 0.000118  & 0.744 & 500 & 500 & 500 \\ 
			\hline
			0.88 & 0.854& 0.000485  & 0.000154  & 0.0321  & 10 & 100 & 1000 \\ 
			0.882  & 0.888 & 0.000496 & 0.000157  & 0.149 & 2 & 500 & 1000 \\ 
			0.774 & 0.878 & 0.000438 & 0.000155 & 0.27 & 1 & 1000 & 1000 \\ 
			\rowcolor{lightgray}
			0.82   & 0.258 & 0.000482 & 5.99$\times 10^{-5}$  & 2.8 & 1000 & 1000 & 1000 \\ 
			\hline
			0.892 & 0.86  & 9.49$\times 10^{-5}$  & 1.73$\times 10^{-5}$  & 0.0485 & 50 & 100 & 5000 \\ 
			0.872 & 0.916 & 9.5$\times 10^{-5}$ & 1.74$\times 10^{-5}$ & 0.158  & 10 & 500 & 5000 \\ 
			0.88 & 0.874  & 9.42$\times 10^{-5}$   & 1.71$\times 10^{-5}$ & 0.291 & 5 & 1000 & 5000 \\ 
			\rowcolor{lightgray}
			0.892  & 0.506   & 9.63$\times 10^{-5}$   & 1.2$\times 10^{-5}$   & 13.4 & 5000 & 1000 & 5000
		\end{tabular}
		\subcaption{$\mu=1$}
	\end{subtable}
	
	\caption{Simulated coverage and length of $90\%$ confidence intervals for the parameter $\theta=(\theta_0,\theta_1)$ of regularized logistic regression on the PUMF dataset. White cells: the interval $C_{m,n}(\alpha)$ defined in \eqref{eqn:CI_mn}, where $m$ is chosen by  \eqref{eqn:choose_m} and different values of $B$ are considered. Grey cells: confidence intervals obtained by $n$ out of $n$ bootstrap, where $B$ is chosen by the rule of thumb from \cite{wang2022}.}
	\label{tab:sim_logReg_mn}
\end{table}

\begin{table}[h!]
	\centering
	\begin{tabular}{lllllllll}
		cov $\theta_0$& cov $\theta_1$ & len $\theta_0$ & len $\theta_1$ & t & n & $\mu$ & $\epsilon$ & $\delta$ \\ 
		\hline 
		\hline
		0.916 & 0.95 & 0.0267 & 0.0236 & 102.8094 & 500 & 0.5 & 1.234 & 0.002\\ 
		0.918 & 0.968 & 0.0282 & 0.0279 & 103.1666 & 500 & 0.5 & 2.100 & 0.000004\\ 
		0.896 & 0.958 & 0.0139 & 0.0121 & 300.6152 & 1000 & 0.5 & 1.352 & 0.001\\ 
		0.88 & 0.976 & 0.014 & 0.014 & 300.8687 & 1000 & 0.5 & 2.254 & 0.000001\\ 
		\hline
		0.786 & 0.954 & 0.013 & 0.012 & 103.3567 & 500 & 1 & 2.912 & 0.002\\ 
		0.862 & 0.988 & 0.0161 & 0.016 & 103.175 & 500 & 1 & 4.586 & 0.000004\\ 
		0.724 & 0.94 & 0.007 & 0.006 & 300.921 & 1000 & 1 & 3.139 & 0.001\\ 
		0.78 & 0.982 & 0.008 & 0.008 & 301.3153 & 1000 & 1 & 4.887 & 0.000001\\ 
		\end{tabular}
	\caption{Simulated coverage and length of $90\%$ confidence intervals for the parameter $\theta=(\theta_0,\theta_1)$ of regularized logistic regression on the PUMF dataset obtained by BLBQuant for $B=500$, where different values of $\delta$ are considered (see Appendix \ref{sec:appendix-blb} for details).}
	\label{tab:blb-logReg}
\end{table} 

In \cref{tab:sim_logReg_mn} we display the results for the $m$ out of $n$ (white cells) and $n$ out of $n$ (gray cells) bootstrap.
We see that the average length of the confidence interval obtained by $m$ out of $n$ bootstrap is similar for $B=100,500$ while there are differences regarding the empirical coverage. 
For both choices of $B$, the empirical coverage converges to $90\%$.
For the $n$ out of $n$ bootstrap, we observe that the obtained confidence intervals are too small for $\theta_1$, leading to a too small empirical coverage of the corresponding intervals that only improves slowly for the intercept with an increasing sample size.
It is  also  worthwhile to mention that the standard errors for different choice of $m$
in the $m$ out of $n$ bootstrap 
are comparable (these results are not displayed for the sake of brevity).  For example, if $\mu=1$ and $n=1000$, the standard errors of the length of the confidence intervals for $\theta_1$ are given by $1.12\times10^{-6} $, $ 5.14\times10^{-7}$ and $ 3.75\times 10^{-7}$ for $m=10,2,1$, respectively. 
Moreover, note that the computational time 
is substantially reduced by  the $m$ out of $n$ bootstrap.
For example, if  $n=5000$  the computational time of the $n$ out of $n$ bootstrap is larger than the computational time of the $m$ out of $n$ bootstrap (with the same number of bootstrap replications) by a factor of 30.\\
In \cref{tab:blb-logReg} we display the corresponding simulation results for BLBQuant. 
We observe that the confidence intervals are larger than the ones obtained by the $m$ out of $n$ bootstrap.
For $\mu=1$, the empirical coverage of the confidence intervals for the intercept $\theta_0$ are too small, while the coverage probabilities of the confidence intervals for $\theta_1$ are to large. 
Further note that the computational time of BLBQuant is even larger than the one of the $n$ out of $n$ bootstrap. This is due to the resampling procedure in BLBQuant.

\subsubsection*{Acknowledgments and Disclosure of Funding}
Funded by the Deutsche Forschungsgemeinschaft (DFG, German Research Foundation) under Germany's Excellence Strategy - EXC 2092 CASA - 390781972.
\newpage \mbox{} \newpage
\bibliography{literatur}
\setcounter{section}{0}

\newpage 
\section*{Appendix}

\renewcommand\thesection{\Alph{section}}
\section{Additional Simulation results}
\label{sec:sim:addition}
We state additional simulation results for a regularized logistic regression problem with a $17$-diemnsional covariate chosen as $x_i=(1,x_i^{1},x_i^{2})/\sqrt{17}$, where $x_i^{1}\in \mathbb{R}^8$ consists of 8 independent normal distributions with mean zero and variance 1 truncated on the interval $[0,1]$ and $x_i^{2}\in \mathbb{R}^8$ consists of 8 independently uniformly distributed random variables on the interval $[0,1]$.
The response $y_i$ is randomly distributed on $\{-1,1\}$ with 
$$
P(y_i=1)=\frac{\exp( \theta^\top  x_i)}{1+\exp(\theta^\top  x_i)}
$$
where 
$\theta =  (0 , 5 , 5 , 5 , 5 , 5 , 5 , 5 , 5, -5 ,-5 ,-5 ,-5, -5 ,-5, -5, -5)^\top $.
The estimator of interest is 
\[\hat{\theta}_n = \argmin{\theta\in\mathbb{R}^{17}} -\frac{1}{n}\sum_{i=1}^n \log\left(\frac{1}{1+\exp(-\theta^\top x_iy_i)}\right) +  \lVert \theta\rVert_2^2,\]
which is privatized by the Gaussian mechanism to ensure $\mu$-GDP
\[\bar{\theta}_n = \hat{\theta}_n + Z,\]
where $Z\sim N(0,1/(n\mu)^2)$. 
In a first step, we draw a sample of size $n=10^6$ and compute the minimizer over that dataset as a reference for computing the empirical coverage of the obtained confidence intervals.
In each simulation run, we will draw $n=500,1000,5000,10000$ samples with replacement from this data set and compute the confidence interval based on that sample. 
We compare two different methods: 
\begin{itemize}
	\item The confidence interval $C_{m,n}(\alpha)$
	defined in \eqref{eqn:CI_mn} which is obtained by the $m$ out of $n$ bootstrap, where $m=m(B)$ is chosen according to \eqref{eqn:choose_m}  for different values $B$.
	\item The confidence interval $C_{n,n}(\alpha)$ defined in \eqref{eqn:CI_mn} (for $m=n$) which is obtained by the $n$ out of $n$ bootstrap, where $B=\mu^2n$ as proposed by \cite{wang2022}.
\end{itemize}
We exemplarily display the simulation results for the intercept $\theta_0$ and the 8th and 10th coordinate $\theta_8,\theta_{10}$ in \cref{tab:addreg}.

The empirical coverage of the confidence intervals obtained from the $m$ out of $n$ bootstrap is closer to the target level of $90\%$ than the confidence intervals computed by the $n$ out of $n$ bootstrap, even for $n=10 000$.
The lengths of these confidence intervals from are much shorter than the ones obtained from the $m$ out of $n$ bootstrap for $n\leq 1000$, but their coverage probabilities are meaningless.
We also note that the computational time is shorter for the $m$ out of $n$ bootstrap. For example, if  $n=10 000$, it takes  at most $0.3$ seconds for the $m$ out of $n$ bootstrap, while  the $n$ out of $n$ about $40$ seconds. This is a factor of $130$ and will be even larger with increasing sample size.

\begin{table}[]
	\centering
	\begin{tabular}{rrr rrr r rrr r}
		cov $\theta_0$ & cov $\theta_8$ & cov $\theta_{10}$ & len $\theta_0$ & len $\theta_8$& len $\theta_{10}$ & time & n & m & B & $\mu$\\ 
		\hline
		\hline
		0.816 & 0.840 & 0.830 & 0.00089 & 0.00084 & 0.00084 & 0.03948 & 500 & 5 & 100 & 0.5\\ 
		0.876 & 0.876 & 0.872 & 0.00091 & 0.00085 & 0.00085 & 0.13652 & 500 & 1 & 500 & 0.5\\ 
		\rowcolor{lightgray}
		0.278 & 0.15 & 0.158 & 0.00037 & 0.0002 & 0.0002 & 0.31151 & 500 & 500 & 125 & 0.5\\ 
		\hline 
		0.826 & 0.852 & 0.858 & 0.00034 & 0.0003 & 0.0003 & 0.04534 & 1000 & 10 & 100 & 0.5\\ 
		0.894 & 0.89 & 0.87 & 0.00035 & 0.00031 & 0.00031 & 0.13819 & 1000 & 2 & 500 & 0.5\\ 
		0.858 & 0.886 & 0.9 & 0.00035 & 0.00031 & 0.00031 & 0.24442 & 1000 & 1 & 1000 & 0.5\\ 
		\rowcolor{lightgray}
		0.406 & 0.216 & 0.242 & 0.00019 & 0.0001 & 0.0001 & 1.07549 & 1000 & 1000 & 250 & 0.5\\ 
		\hline 
		0.896 & 0.876 & 0.856 & 0.00004 & 0.00003 & 0.00003 & 0.09716 & 5000 & 50 & 100 & 0.5\\ 
		0.888 & 0.886 & 0.874 & 0.00005 & 0.00003 & 0.00003 & 0.17455 & 5000 & 10 & 500 & 0.5\\ 
		0.902 & 0.87 & 0.892 & 0.00005 & 0.00003 & 0.00003 & 0.3117 & 5000 & 5 & 1000 & 0.5\\ 
		\rowcolor{lightgray}
		0.638 & 0.482 & 0.472 & 0.00004 & 0.00002 & 0.00002 & 19.61842 & 5000 & 5000 & 1000 & 0.5\\ 
		\hline
		0.876 & 0.87 & 0.892 & 0.00002 & 0.00001 & 0.00001 & 0.08891 & 10000 & 100 & 100 & 0.5\\ 
		0.876 & 0.888 & 0.894 & 0.00002 & 0.00001 & 0.00001 & 0.20381 & 10000 & 20 & 500 & 0.5\\ 
		0.890 & 0.896 & 0.91 & 0.00002 & 0.00001 & 0.00001 & 0.29243 & 10000 & 10 & 1000 & 0.5\\ 
		\rowcolor{lightgray}
		0.768 & 0.584 & 0.588 & 0.00002 & 0.00001 & 0.00001 & 40.81514 & 10000 & 10000 & 1000 & 0.5\\ 
		\hline
		\hline
		\hline 
		0.850 & 0.84 & 0.844 & 0.00055 & 0.00045 & 0.00045 & 0.03501 & 500 & 5 & 100 & 1\\ 
		0.886 & 0.866 & 0.892 & 0.00055 & 0.00046 & 0.00046 & 0.12462 & 500 & 1 & 500 & 1\\ 
		\rowcolor{lightgray}
		0.53 & 0.324 & 0.322 & 0.00038 & 0.00021 & 0.00021 & 1.20586 & 500 & 500 & 500 & 1\\ 
		\hline
		0.888 & 0.866 & 0.88 & 0.00023 & 0.00018 & 0.00018 & 0.04303 & 1000 & 10 & 100 & 1\\ 
		0.868 & 0.892 & 0.892 & 0.00024 & 0.00018 & 0.00018 & 0.13252 & 1000 & 2 & 500 & 1\\ 
		0.884 & 0.882 & 0.898 & 0.00023 & 0.00018 & 0.00018 & 0.25739 & 1000 & 1 & 1000 & 1\\ 
		\rowcolor{lightgray}
		0.658 & 0.404 & 0.42 & 0.00019 & 0.0001 & 0.0001 & 4.11911 & 1000 & 1000 & 1000 & 1\\ 
		\hline
		0.900 & 0.886 & 0.89 & 0.00004 & 0.00002 & 0.00002 & 0.05685 & 5000 & 50 & 100 & 1\\ 
		0.906 & 0.9 & 0.914 & 0.00004 & 0.00002 & 0.00002 & 0.1465 & 5000 & 10 & 500 & 1\\ 
		0.902 & 0.894 & 0.902 & 0.00004 & 0.00002 & 0.00002 & 0.26353 & 5000 & 5 & 1000 & 1\\ 
		\rowcolor{lightgray}
		0.83 & 0.686 & 0.682 & 0.00004 & 0.00002 & 0.00002 & 20.6991 & 5000 & 5000 & 1000 & 1\\ 
		\hline
		0.888 & 0.9 & 0.866 & 0.00002 & 0.00001 & 0.00001 & 0.08654 & 10000 & 100 & 100 & 1\\ 
		0.894 & 0.91 & 0.91 & 0.00002 & 0.00001 & 0.00001 & 0.17674 & 10000 & 20 & 500 & 1\\ 
		0.89 & 0.888 & 0.874 & 0.00002 & 0.00001 & 0.00001 & 0.28269 & 10000 & 10 & 1000 & 1\\ 
		\rowcolor{lightgray}
		0.864 & 0.772 & 0.792 & 0.00002 & 0.00001 & 0.00001 & 40.32045 & 10000 & 10000 & 1000 & 1\\ 
	\end{tabular}

	\caption{Simulated coverage and length of $90\%$ confidence intervals for the intercept $\theta_0$ and the $8$-th and $10$-th coordinate of a regularized logistic regression with $\theta\in \mathbb{R}^{17}$ on synthetic data. 
		White cells: the confidence interval $C_{m,n}(\alpha)$ defined in \eqref{eqn:CI_mn}, where $m$ according to  the rule  \eqref{eqn:choose_m}, and different values of $B$ are considered.
		Gray cells: confidence intervals obtained by $n$ out of $n$ bootstrap, where $B$ is chosen by the rule of thumb in \cite{wang2022}.}
	\label{tab:addreg}
\end{table}

\section{Proofs}
\subsection{Proof of \cref{thm:asymp_privacy_bootstrap_mn}}
\label{sec:m_n_is_dp}
We start by stating  several results  and definitions from  \cite{dong2022gaussian} and \cite{wang2022} that are used in the proof  of \cref{thm:asymp_privacy_bootstrap_mn}.  We introduce the notations 
\begin{align*}
	kl(f) &:= -\int_0^1 \log |f'(x)|dx\\
	\kappa_2&:=\int_0^1\log ^2|f'(x)|dx\\
	\kappa_3 &:= \int_0^1|\log|f'(x)||^3dx\\
	\bar{\kappa}_3 &:= \int_0^1|\log|f'(x)|+kl(f)|^3dx.
\end{align*}
and state the  following CLT-type theorem.
\begin{thm}[Theorem 6 \cite{dong2022gaussian}] \label{thm:asyptotic_composition}
	Let $\{f_{ni}: 1\leq i\leq n\}_{n=1}^\infty$ be a triangular array of symmetric trade-off functions and assume  that there exists the  constants $K\geq 0$ and $s>0$ such that 
	\begin{enumerate}
		\item $\lim\limits_{n\to \infty}\sum\limits_{i=1}^n kl(f_{ni})= K$
		\item $\lim\limits_{n\to \infty}\max\limits_{1\leq i \leq n} kl(f_{ni})=  0$
		\item $\lim\limits_{n\to \infty}\sum\limits_{i=1}^n \kappa_2(f_{ni})=  s^2$
		\item $\lim\limits_{n\to \infty}\sum\limits_{i=1}^n \kappa_3(f_{ni})= 0$.
	\end{enumerate}
	Then, we have
	\[\lim_{n\to\infty} f_{n1}\otimes \ldots \otimes f_{nn} (\alpha)=G_{2K/s}(\alpha)\]
	uniformly for all $\alpha \in [0,1]$.
\end{thm}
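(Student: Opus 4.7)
The plan is to verify the four hypotheses of the CLT-type composition theorem (\cref{thm:asyptotic_composition} from \cite{dong2022gaussian}) applied to the triangular array $\{f_{B,b,boot_{m,n}}: 1\le b\le B\}_{B=1}^\infty$. Since the trade-off function does not depend on $b$, I abbreviate $f_B := f_{B,b,boot_{m,n}}$; throughout, $m$ and $n$ are fixed and the limit is $B\to\infty$, so that $\nu:=\mu_B^\star\to 0$.

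\emph{Step 1 (identify the underlying pair $(P,Q)$).} For neighboring data sets differing in the first entry, the $b$-th privatized bootstrap estimator yields, after a common location shift, $P = N(0,\sigma^2)$ and $Q = \sum_{i=0}^m p_{m,i}\,N(i\Delta,\sigma^2)$, where $\Delta := \Delta_\theta(m)$, $\sigma := \Delta/\mu_B^\star$, and $p_{m,i} = \binom{m}{i}n^{-i}(1-1/n)^{m-i}$ is the probability that the differing point is selected exactly $i$ times in a bootstrap draw of size $m$. Pulling out the $i=0$ branch gives $Q = p_0 P + (1-p_0)\tilde Q$, matching the representation $f_B = C_{1-p_0}(mix(\cdots))$ of \eqref{det101}.

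\emph{Step 2 (Taylor-expand the three functionals in $\nu$).} With the Gaussian likelihood ratios $g_i(x) := dN(i\Delta,\sigma^2)/dN(0,\sigma^2) = \exp(i\nu x/\sigma - i^2\nu^2/2)$, a direct computation gives $E_P[g_i g_j] = e^{ij\nu^2}$. For $y := Q/P - 1 = \sum_{i\ge 1} p_{m,i}(g_i - 1)$ this yields
\begin{equation*}
E_P[y]=0, \qquad E_P[y^2] = \sum_{i,j\ge 1}p_{m,i} p_{m,j}(e^{ij\nu^2}-1) = \nu^2 (m/n)^2 + O(\nu^4).
\end{equation*}
Using the identifications $kl(T(P,Q))=-E_P[\log(Q/P)]$, $\kappa_2(T(P,Q))=E_P[(\log Q/P)^2]$, $\kappa_3(T(P,Q))=E_P[|\log Q/P|^3]$, together with $\log(1+y)=y-y^2/2+O(y^3)$ and the Cauchy--Schwarz bound $E_P[|y|^3]\le \sqrt{E_P[y^2]\,E_P[y^4]}$ with $E_P[y^4]=O(\nu^4)$, I obtain
\begin{equation*}
kl(f_B) = \tfrac{1}{2}\nu^2(m/n)^2 + O(\nu^4), \qquad \kappa_2(f_B) = \nu^2(m/n)^2 + O(\nu^4), \qquad \kappa_3(f_B)=O(\nu^3).
\end{equation*}

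\emph{Step 3 (plug in $\mu_B^\star$ from \eqref{det3} and take $B\to\infty$).} The definition of $\mu_B^\star$ is equivalent to $B\nu^2 = \mu^2/[(1-p_0)(n+m-1)m/n^2]$ with $p_0 = (1-1/n)^m$. Hence
\begin{equation*}
\sum_{b=1}^B kl(f_B) \to K^\ast := \frac{\mu^2}{2}\cdot \frac{m/n}{(1-p_0)(n+m-1)/n}, \qquad \sum_{b=1}^B \kappa_2(f_B)\to 2K^\ast,
\end{equation*}
while $\sum_b \kappa_3(f_B) = O(B\nu^3) = O(B^{-1/2}) \to 0$ and $\max_b kl(f_B) = kl(f_B) = O(1/B) \to 0$. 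All four hypotheses of \cref{thm:asyptotic_composition} are therefore satisfied (with $K = K^\ast$ and $s^2 = 2K^\ast$), so
\begin{equation*}
\lim_{B\to\infty} f_B\otimes\cdots\otimes f_B = G_{2K^\ast/s^\ast} = G_{s^\ast}, \quad s^\ast := \sqrt{2K^\ast}.
\end{equation*}

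\emph{Step 4 (arithmetic inequality $s^\ast \leq \mu$).} It remains to show $K^\ast \leq \mu^2/2$, which reduces to the inequality $(1-(1-1/n)^m)(n+m-1) \geq m$ for all integers $m,n\ge 1$. I prove this by induction on $m$: the base case $m=1$ gives equality, and a short manipulation shows that the inductive step reduces to $m(n-1)\ge 0$. Consequently $s^\ast\le \mu$, and since the Gaussian trade-off function $G_\nu$ is pointwise decreasing in $\nu$, one obtains $G_{s^\ast}\ge G_\mu$, which is the claim.

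\emph{Main obstacle.} The chief technical difficulty lies in Step 2: controlling the higher-order Taylor remainders of $kl$, $\kappa_2$, $\kappa_3$ carefully enough that the $B$-fold sums converge to the stated limits rather than merely being bounded. In the $n$-out-of-$n$ case of \cite{wang2022} the weights $p_{m,i}$ concentrate on $i=1$ and the calculation simplifies, whereas here the $\mathrm{Binomial}(m,1/n)$ weights may have genuine spread when $m$ is comparable to $n$, so each $E_P[y^k]$ must be handled as a full multi-index sum, with the clean algebraic identity $E_P[g_i g_j] = e^{ij\nu^2}$ providing the key simplification.
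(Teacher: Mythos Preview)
There is a mismatch at the outset: the displayed statement is the general CLT for composed trade-off functions (Theorem~6 of \cite{dong2022gaussian}), which the paper \emph{cites} as a tool and does not prove. Your write-up is not an argument for that general CLT; it is an attempt to verify its hypotheses for the particular array $\{f_{B,b,boot_{m,n}}\}$, i.e.\ you are really trying to prove \cref{thm:asymp_privacy_bootstrap_mn}. I will assess it as such.

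Viewed as a proof of \cref{thm:asymp_privacy_bootstrap_mn}, Step~1 contains a genuine gap. You identify $f_B:=f_{B,b,boot_{m,n}}$ with $T(P,Q)$ for $P=N(0,\sigma^2)$ and $Q=\sum_{i=0}^m p_{m,i}N(i\Delta,\sigma^2)$, but $f_B$ in \eqref{det101} is \emph{defined} as $C_{1-p_0}\bigl(mix(\mathbf{p},(G_{\mu_B^\star},\ldots,G_{m\mu_B^\star}))\bigr)$, and this is not the same trade-off function as your common-variance Gaussian-versus-mixture test. The paper makes the correct identification via Lemma~29 of \cite{wang2022}: $f_>=mix(\cdot)$ is the trade-off between two Gaussian mixtures whose $i$-th components have variance $i^2\nu^2$ (not a fixed $\sigma^2$). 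With that representation and Lemmas~F.2--F.3 of \cite{dong2022gaussian} one gets $kl(f_>)=\tfrac{\nu^2}{2(1-p_0)}\sum_i p_{m,i}\,i^2=\tfrac{\nu^2}{2(1-p_0)}\,\E[X^2]$ for $X\sim\mathrm{Bin}(m,1/n)$, then $kl(C_{1-p_0}(f_>))\sim(1-p_0)^2kl(f_>)$, and after inserting \eqref{det3} the sums converge to $K=\mu^2/2$ and $s^2=\mu^2$ \emph{exactly}, so $2K/s=\mu$ with no inequality needed. Your expansion instead produces the leading term $\tfrac{\nu^2}{2}(\E[X])^2=\tfrac{\nu^2}{2}(m/n)^2$, which yields $K^\ast=\tfrac{\mu^2}{2}\cdot\tfrac{m}{(1-p_0)(n+m-1)}<\tfrac{\mu^2}{2}$ whenever $m\ge 2$; the inequality in your Step~4 is there precisely to paper over the discrepancy created by computing $kl$ of the wrong trade-off function. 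Two further symptoms of the same error: your $T(P,Q)$ is not symmetric (so the CLT you invoke does not even apply to it), and it is only the output trade-off for one very particular statistic and data configuration, not the quantity \eqref{det101} that the theorem is about.
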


For a vector $\mathbf{f} =(f_1,\ldots,f_k)$ of  trade-off functions and a vector $\mathbf{p} =(p_1,\ldots , p_k) $ of probabilities with $\sum_{i=1}^k p_i =1$  define the {\it mixture of trade-off functions} $mix(\mathbf{p},\mathbf{f}) : [0,1] \to  [0,1]  $ as
\begin{equation}
	\label{eqn:def-mix}
	mix(\mathbf{p},\mathbf{f} )(\alpha):=\begin{cases}
		\sum_{i=1}^k p_i f_i(0) & \text{ , } \alpha=0\\
		0 & \text{ , } \alpha=1\\
		\sum_{i=1}^k p_i f_i(\alpha_i) & \text{ , otherwise}
	\end{cases},
\end{equation}
where $\alpha_1, \ldots , \alpha_k $ are such that $\sum_{i=1}^k p_i\alpha_i=\alpha$ and such that there exists a constant $C$ with $C\in \partial f_i(\alpha_i)$ for $i=1,\ldots, k$. 
By Lemma 28 of \cite{wang2022}, 
$mix(\mathbf{p},\mathbf{f} )$ is a well-defined trade-off function.

To compute asymptotic privacy guarantees for the $m$ out of $n$ bootstrap, let  
$i_1, \ldots  , i_m$
denote independent random variables uniformly distributed on the index set  $\{1,\ldots,n\}$ and define the  randomized mapping
\begin{align*}
	boot_{m,n}:
	\begin{cases}
		&  \mathcal{X}^n\to \mathcal{X}^m\\
		& (x_1,\ldots,x_n)\mapsto (x_{i_1},\ldots,x_{i_m})  
	\end{cases} ~~. 
\end{align*}
Denote the probability, that a given entry is $i$ times included in $boot_{m,n}(x_1,\ldots,x_n)$, by
\begin{equation}\label{eq:bootstrap_mn_prob}
	p_{m,i}=\binom{m}{i}\left(\frac{1}{n}\right)^i \left(1-\frac{1}{n}\right)^{m-i}.
\end{equation}
Further denote the convex conjugate of a trade-off function $f$ by $f^\star(y)=\sup_{-\infty<x<\infty} xy-f(x)$ and for $0\leq p\leq 1$ denote 
$f_p(x):= pf(x) - (1-p)(1-x)$ 
and 
\begin{equation}\label{eqn:def-C_p}
	C_p(f):= min\{f_p,f_p^{-1}\}^{\star\star}.
\end{equation}
With this, we can apply Theorem 8 of \cite{wang2022} to obtain the following privacy guarantee for one bootstrap estimator, from which the privacy guarantee for $\bar{\theta}_m^\star$ as stated in \eqref{det101} follows with $M=M_{Gauss}$ and $f_i=G_{i\mu_B^\star}$. 

\begin{thm} \label{thm:privacy_bootstrap_mn}
	Let $f_i$ be a symmetric trade-off function for $i=1,\ldots, m$. For a mechanism $M$ satisfying $f_i$-DP with group size $i$, $M\circ boot_{m,n}$ satisfies $f_{M\circ boot_{m,n}}$-DP where 
	\[f_{M\circ boot_{m,n}}=C_{1-p_{m,0}} (mix(\mathbf{p},\mathbf{f})),\]
	where $\mathbf{f}=(f_1,\ldots,f_m)$, $\mathbf{p}=(\frac{p_{m,1}}{1-p_{m,0}},\ldots,\frac{p_{m,m}}{1-p_{m,0}})$ and $p_{m,i}$ as in \eqref{eq:bootstrap_mn_prob}.
\end{thm}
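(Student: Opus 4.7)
The plan is to apply Theorem~8 of \cite{wang2022} to the operator $boot_{m,n}$, which already fits the ``mechanism composed with subsampling under group privacy'' framework covered by that result; the work is in matching up the parameters and verifying the inclusion-probability calculation that makes $p_{m,i}$ appear.

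First I would condition on the multiplicity of the differing index. Let $S,S'\in\mathcal{X}^n$ be neighboring, differing only in coordinate $1$, and let $I$ denote the (random) number of times index $1$ is chosen by $boot_{m,n}$. Since the indices $i_1,\ldots,i_m$ are drawn independently and uniformly from $\{1,\ldots,n\}$, $I\sim\operatorname{Bin}(m,1/n)$, so
\[
P(I=i) = \binom{m}{i}\Bigl(\tfrac{1}{n}\Bigr)^i\Bigl(1-\tfrac{1}{n}\Bigr)^{m-i} = p_{m,i}, \qquad i=0,1,\ldots,m,
\]
which matches \eqref{eq:bootstrap_mn_prob}. Conditional on $\{I=i\}$ the bootstrap samples from $S$ and $S'$ differ in exactly $i$ coordinates, so the assumed $f_i$-group-privacy of $M$ yields $T(M\circ boot_{m,n}(S),\,M\circ boot_{m,n}(S')\mid I=i)\geq f_i$. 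In the degenerate case $i=0$ the two conditional output laws coincide, giving the trivial trade-off function $\alpha\mapsto 1-\alpha$ (perfect privacy).

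Next I would assemble the unconditional trade-off function as a mixture. The law of total probability decomposes the joint output into a mass-$p_{m,0}$ perfectly-private branch plus a mass-$(1-p_{m,0})$ branch whose conditional weights on the remaining $f_i$ are $p_{m,i}/(1-p_{m,0})$ for $i=1,\ldots,m$. By definition \eqref{eqn:def-mix}, this weighted mixture of the $(f_i)$ equals $\operatorname{mix}(\mathbf{p},\mathbf{f})$, and the additional $p_{m,0}$-mass of perfect privacy is absorbed precisely by the operator $C_{1-p_{m,0}}$ from \eqref{eqn:def-C_p}---this is the role of $C_p$ in subsampling amplification, as verified in Lemma~28 of \cite{wang2022}. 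Putting the two together gives $f_{M\circ boot_{m,n}} = C_{1-p_{m,0}}(\operatorname{mix}(\mathbf{p},\mathbf{f}))$ as claimed.

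The main obstacle is the measure-theoretic justification of the conditioning step: one must verify that the distribution of $M\circ boot_{m,n}$ indeed factors as a mixture indexed by $I$ with conditional components inheriting the stated group-privacy bound, uniformly over the choice of neighboring pair $S,S'$. This factorization is valid because the index vector $(i_1,\ldots,i_m)$ is independent of the data, so conditioning on $I$ exactly controls the number of coordinates in which the two bootstrap samples can disagree. Once this is in place, the conclusion is a direct application of Theorem~8 of \cite{wang2022} with subsampling probability $p=1-p_{m,0}$, mixing weights $(p_{m,i}/(1-p_{m,0}))_{i=1}^m$, and component trade-off functions $(f_1,\ldots,f_m)$, with no further computation required beyond the binomial identification of $p_{m,i}$.
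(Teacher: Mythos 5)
Your setup matches the paper's first step: conditioning on the multiplicity $I\sim\operatorname{Bin}(m,1/n)$ of the differing index gives the probabilities $p_{m,i}$ in \eqref{eq:bootstrap_mn_prob}, group privacy gives the conditional bound $f_i$ on the event $\{I=i\}$, and $I=0$ gives perfect privacy. The gap is in how you pass from these conditional bounds to the unconditional trade-off function. The trade-off function of a mixture of distributions is \emph{not} obtained by simply mixing the conditional trade-off functions: even the statement that the branch ``included at least once'' is $mix(\mathbf{p},\mathbf{f})$-DP is a theorem (this is exactly what Theorem~8 of \cite{wang2022} is invoked for in the paper), not a consequence of the definition \eqref{eqn:def-mix}; Lemma~28 of \cite{wang2022} only guarantees that $mix(\mathbf{p},\mathbf{f})$ is a well-defined trade-off function and says nothing about ``absorbing'' the perfectly private mass into $C_{1-p_{m,0}}$. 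That absorption step is the heart of the proof and you assert it rather than prove it. The paper establishes it by decomposing the type I and type II errors of an arbitrary test as $\alpha=p_{m,0}\alpha_0+(1-p_{m,0})\alpha_>$ and $\beta=p_{m,0}\beta_0+(1-p_{m,0})\beta_>$, deriving the cross constraints $\beta_0=1-\alpha_0\geq f_>(\alpha_>)$ and $\beta_>\geq f_>(\alpha_0)$ (this is where the asymmetry enters that forces the operator $\min\{f_p,f_p^{-1}\}^{\star\star}$ in \eqref{eqn:def-C_p} rather than the naive mixture $f_p$), solving the resulting constrained minimization via Karush--Kuhn--Tucker to get an explicit piecewise lower bound $f_{min}$, and then identifying $f_{min}=C_{1-p_{m,0}}(f_>)$.

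Relatedly, your conclusion that the theorem is ``a direct application of Theorem~8 of \cite{wang2022} with subsampling probability $1-p_{m,0}$'' overstates what that result delivers here: in the paper it is applied only to the conditional mechanism $M\circ boot_{m,n}^>$ (yielding $f_>=mix(\mathbf{p},\mathbf{f})$), and the amplification by the probability-$p_{m,0}$ perfectly private branch must then be re-derived for the $m$ out of $n$ setting. Nor can one shortcut this via the subsampling theorem of \cite{dong2022gaussian}, since its hypotheses concern subsampling without replacement and a specific structural relation between the conditional laws that resampling with replacement from two equal-size neighboring datasets does not directly satisfy. To repair your argument you would need to supply (or correctly cite) the optimization argument that converts the two-branch decomposition into the $C_{1-p_{m,0}}$ bound.
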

\begin{proof}
	Let $D_1=(x_1,\ldots,x_n)$, $D_2=(x_1',x_2,\ldots,x_n)$ two neighboring databases that, without loss of generality,  differ in the first entry.
	We will give a lower bound $f_{min}(\alpha)$  for the trade-off function $T(M\circ boot_{m,n}(D_1),M\circ boot_{m,n}(D_2))(\alpha)$, which coincides with $f_{M\circ boot_{m,n}}$.\\
	In a first step only consider the events, where the first element is sampled at least once in the bootstrap iterations. 
	Denote this randomized mapping by $boot_{m,n}^>$, i.e.
	\[boot_{m,n}^> = \begin{cases}
		\mathcal{X}^n\to \mathcal{X}^m & \\
		(x_1,\ldots,x_n)\mapsto (x_{i_1},\ldots,x_{i_m})& \textnormal{such that } |\{j : i_j=1\}| \geq 1.
	\end{cases}\]
	The probability that $x_1$ is sampled exactly $i$ times by $boot_{m,n}^>$ is given by $\frac{p_{m,i}}{1-p_{m,0}}$, where  $p_{m,i}$ is defined  in \eqref{eq:bootstrap_mn_prob}. 
	Therefore, by Theorem 8 of \cite{wang2022}, $M\circ boot_{m,n}^>$ is $f_>$-DP, where 
	$$
	f_>=mix(\mathbf{f},\mathbf{p}) 
	$$ and 
	$$\mathbf{f}=(f_1,\ldots,f_m)~, ~~~ 
	\mathbf{p}=(\frac{p_{m,1}}{1-p_{m,0}},\ldots,\frac{p_{m,m}}{1-p_{m,0}}).$$
	Denote by $boot_{m,n}^0$ the randomized mechanism where the first element is not included in the bootstrap sample, i.e.
	\[boot_{m,n}^0 = \begin{cases}
		\mathcal{X}^n\to \mathcal{X}^m & \\
		(x_1,\ldots,x_n)\mapsto (x_{i_1},\ldots,x_{i_m})& \textnormal{such that } i_j \neq 1 \textnormal{ for all } j=1,\ldots,m
	\end{cases}\]
	and note that $M\circ boot_{m,n}^0$ is $f_0$-DP, where $f_0(x)=1-x$.\\
	Next,  note that the Type I error of a decision rule $\psi$ between the hypotheses $M\circ boot_{m,n}(D_1)$ vs. $M\circ boot_{m,n}(D_2)$ is given by 
	\[\alpha=\E_{M\circ boot_{m,n}(D_1)}[\psi] =p_{m,0} \alpha_0 + (1-p_{m,0})\alpha_>,\]
	where $\alpha_0=\E_{M\circ boot_{m,n}^0(D_1)}[\psi]$, $\alpha_>=\E_{M\circ boot_{m,n}^>(D_1)}[\psi]$ and therefore  
	\begin{align}
		\label{det201}
		\alpha_>
		=\frac{\alpha-p_{m,0}\alpha_0}{1-p_{m,0}}.
	\end{align}
	Denoting $\beta_0=\E_{M\circ boot_{m,n}^0(D_2)}[1-\psi]=1-\alpha_0$ and  
	$\beta_>=\E_{M\circ boot_{m,n}^>(D_2)}[1-\psi]$ we obtain by 
	\begin{align} \label{eqn:type2}
		\beta=\E_{M\circ boot_{m,n}(D_2)}[1-\psi] =p_{m,0} \beta_0 + (1-p_{m,0})\beta_>
	\end{align}
	a similar expression for the Type II error.
	In a third step, with the same arguments as in \cite{wang2022}, it can be shown that $\beta_0=1-\alpha_0\geq f_>(\alpha_>)$ and $\beta_>\geq f_>(\alpha_0)$.
	Therefore, a lower bound for the smallest achievable Type II error for a given Type I error for distinguishing $M\circ boot_{m,n}(D_1)$ vs. $M\circ boot_{m,n}(D_2)$ is obtained by minimizing \eqref{eqn:type2} with respect to $\beta_0,\beta_>$ under the before mentioned constraints.
	More precisely, for a given $\alpha$, denote by $f_{min}(\alpha)$ the minimum of
	\begin{align}
		\label{det202}
		&
		p_{m,0}(1-\alpha_0) + (1-p_{m,0})\beta_>
	\end{align}
	with respect to $\alpha_0$ and 
	$\beta_>$
	subject to 
	\begin{align*}
		\begin{array}{ccccc}
			0 & \leq & \alpha_0  & \leq  & 1-f_>\left(\frac{\alpha-p_{m,0}\alpha_0}{1-p_{m,0}}\right)\\
			f_>\left(\frac{\alpha-p_{m,0}\alpha_0}{1-p_{m,0}}\right) & \leq & \beta_> & \leq & 1\\
			f_>(\alpha_0) & \leq & \beta_> &&
		\end{array}
	\end{align*}
	Then it holds that $f_{min}(\alpha)\leq \beta$ and therefore $T(M\circ boot_{m,n}(D_1),M\circ boot_{m,n}(D_2))\geq f_{min}$.
	Solving this optimization problem using the Karush–Kuhn–Tucker theorem leads for \eqref{det202} the representation
	\[f_{min}(\alpha)=\begin{cases}
		p_{m,0} + (1-p_{m,0})f_>(0) &\textnormal{ , } \alpha= 0\\
		p_{m,0}(1-\alpha) + (1-p_{m,0}) f_>(\alpha) &\textnormal{ , } 0<\alpha\leq\bar{\alpha}\\
		p_{m,0} - \alpha +2(1-p_{m,0})\bar{\alpha} &\textnormal{ , } \bar{\alpha}<  \alpha \leq (1-2p_{m,0})\bar{\alpha} + p_{m,0}\\
		1-\alpha_0^\star &\textnormal{ , }(1-2p_{m,0})\bar{\alpha} + p_{m,0}<\alpha < p_{m,0} + (1-p_{m,0})f_>(0)\\
		0 &\textnormal{ , }  p_{m,0} + (1-p_{m,0})f_>(0) \leq \alpha \\
	\end{cases},\]
	where $f_>(\bar{\alpha})=\bar{\alpha}$ and $p_{m,0}\alpha_0^\star + (1-p_{m,0})f_>(1-\alpha_0^\star) =\alpha$. 
	Using similar arguments as in \cite{wang2022}, it can be shown that $f_{min}$ coincides with $C_{1-p_{m,0}}(f_>)$, which concludes the proof.
\end{proof}

With this, we will now state the proof of the theorem.
\begin{proof}[Proof of \cref{thm:asymp_privacy_bootstrap_mn}]
	We use similar arguments as given  in the proof of Theorem 12 of \cite{wang2022}. 
	In a first step, note that a mechanism satisfying $\mu_B^\star$-GDP satisfies $i\mu_B^\star$-GDP for group size $i$. 
	Therefore, by \cref{thm:privacy_bootstrap_mn}, the $b$-th bootstrap sample is $f_{B,b,boot_{m,n}}=C_{1-p_{m,0}}(f_>)$-DP, where 
	\[f_> = mix\left((G_{\mu_B^\star}, G_{2\mu_B^\star},\ldots,G_{m\mu_B^\star}),\left(\frac{p_{m,1}}{1-p_{m,0}},\ldots,\frac{p_{m,m}}{1-p_{m,0}}\right)\right)\]
	and $G_{k\mu^*_B}$  is the trade-off function  in \eqref{det102}.  
	By  Lemma 29 in \cite{wang2022} it follows   that $f_>$ is the trade-off function for testing 
	\[H_0\text{: } X\sim \sum_{i=1}^m \frac{p_{m,i}}{1-p_{m,0}}N\left(-\frac{i^2\mu_B^{\star2}}{2},i^2\mu_B{\star2}\right) \text{ vs. } 
	H_1\text{: }  X\sim \sum_{i=1}^m \frac{p_{m,i}}{1-p_{m,0}}N\left(\frac{i^2\mu_B^{\star2}}{2},i^2\mu_B^{\star2}\right).\]
	This representation together with Lemma F.2 and F.3 in \cite{dong2022gaussian} can be used to obtain the following representation for the quantities in \cref{thm:asyptotic_composition}:
	\begin{align*}
		kl(f_>)&=\sum_{i=1}^m \frac{p_{m,i}}{1-p_{m,0}}\frac{i^2\mu_B^{\star2}}{2} = \frac{m}{n}\frac{1+m/n - 1/n}{2(1-p_{m,0})}\mu_B^{\star2}\\
		\kappa_2(f_>)&=\sum_{i=1}^m \frac{p_{m,i}}{1-p_{m,0}}\left(\frac{i^4\mu_B^{\star4}}{4} + i^2\mu_B^{\star2}\right)= \frac{m}{n}\frac{1+m/n - 1/n}{1-p_{m,0}}\mu_B^{\star2} + \Theta(\mu_B^{\star4})\\
		\kappa_3(f_>) &\in \Theta(\mu_B^{\star3}). 
	\end{align*}
	Further, following same arguments as \cite{wang2022}, it holds that
	\begin{align*}
		\lim_{\mu_B\to 0}kl(C_{1-p_{m,0}}(f_>)) &= (1-p_{m,0})^2\lim_{\mu_B\to 0} kl(f_>)\\
		\lim_{\mu_B\to 0}\kappa_2(C_{1-p_{m,0}}(f_>)) &= (1-p_{m,0})^2\lim_{\mu_B\to 0}\kappa_2(f_>)\\
		\lim_{\mu_B\to 0} \kappa_3(C_{1-p_{m,0}}(f_>)) &= (1-p_{m,0})^3\lim_{\mu_B\to 0}\kappa_3(f_>).
	\end{align*}
	Noting that $p_{m,0}=(1-1/n)^m$ and with the assumption 
	$$\lim_{B\to\infty} \mu_B^{\star2} B\left(1-\left(1-\frac{1}{n}\right)^m\right)\frac{m}{n}\left(1+\frac{m-1}{n}\right)\to \mu^2 ,
	$$
	we obtain that 
	\begin{align*}
		K=\lim_{B\to\infty} \sum_{i=1}^B kl(f_{Bi,boot}) &= (1-p_{m,0})^2 \lim_{B\to\infty} B kl(f_>) = \mu^2/2\\
		\lim_{B\to\infty } \max_{1\leq i \leq B} kl(f_{Bi,boot}) &= (1-p_{m,0})^2\lim_{B\to\infty }  kl(f_>) = 0\\
		s^2=\lim_{B\to\infty} \sum_{i=1}^B \kappa_2(f_{Bi,boot}) &= (1-p_{m,0})^2 \lim_{B\to\infty} B \kappa_2(f_>) = \mu^2\\
		\lim_{B\to\infty} \sum_{i=1}^B \kappa_3(f_{Bi,boot}) &= (1-p_{m,0})^3 \lim_{B\to\infty} B \kappa_3(f_>) = 0
	\end{align*}
	and therefore $2K/s= \mu$ .
\end{proof}
\subsection{Proof of \cref{thm:asymp_consistency_bootstrap_mn}}
It holds that 
\begin{align*}
	&\sup_x |P(\sqrt{n}(\bar{\theta}_n-\theta)\leq x) - \frac{1}{B}\sum_{i=1}^B \mathbb{I}\{\sqrt{m}(\bar{\theta}_m^{\star(i)}-\bar{\theta}_n)\leq x\}|\\
	\leq & \sup_x |P(\sqrt{n}(\hat{\theta}_n-\theta)\leq x) - P^\star(\sqrt{m}(\hat{\theta}_m^\star-\hat{\theta}_n)\leq x|X_1,\ldots,X_n)|\\
	&+ \sup_x |P(\sqrt{n}(\bar{\theta}_n-\theta)\leq x) -P(\sqrt{n}(\hat{\theta}_n-\theta)\leq x)|\\
	&+\sup_x |P(\sqrt{m}(\bar{\theta}_m^\star-\bar{\theta}_n)\leq x|X_1,\ldots,X_n,Z) - P(\sqrt{m}(\hat{\theta}_m^\star-\hat{\theta}_n)\leq x|X_1,\ldots,X_n)|\\
	&+\sup_x|\frac{1}{B}\sum_{b=1}^B \mathbb{I}\{\sqrt{m}(\bar{\theta}_{m,b}^{\star}-\bar{\theta}_n)\leq x\}-P(\sqrt{m}(\bar{\theta}_m^\star-\bar{\theta}_n)\leq x|X_1,\ldots,X_n,Z)|.
\end{align*}
The first term converges to zero by the assumption on the  consistency of the non-private $m$ out of $n$ bootstrap.
Further note that $\bar{\theta}_n=\hat{\theta}_n + Z$ , see \eqref{eqn:gdp_esti}, and $\bar{\theta}_m^\star=\hat{\theta}_m^\star +Y$, see \eqref{eqn:bootstrap_private}, and for the quantities  $\sqrt{m}Z$, $\sqrt{n}Z$ and $\sqrt{m}Y$ it holds that each 
converge to zero in probability by the assumption on the sensitivity of the estimator, on $B$ and on $m$.
Therefore, the second term converges to zero as well.
The third term converges in probability to zero since $\sqrt{m}Z$ and $\sqrt{m}Y$ converge to zero in probability and therefore both terms converge to the same limit in probability.
The last term converges in probability to zero as well, since its conditional expectation can be bounded in the following way:
\begin{align*}
	&\E[\sup_x|\frac{1}{B}\sum_{b=1}^B \mathbb{I}\{\sqrt{m}(\bar{\theta}_{m,b}^{\star}-\bar{\theta}_n)\leq x\}-P(\sqrt{m}(\bar{\theta}_m^\star-\bar{\theta}_n)\leq x|X_1,\ldots,X_n,Z)||X_1,\ldots,X_n,Z] \\
	= & \int_0^\infty P(\sup_x|\frac{1}{B}\sum_{b=1}^B \mathbb{I}\{\sqrt{m}(\bar{\theta}_{m,b}^{\star}-\bar{\theta}_n)\leq x\}-P(\sqrt{m}(\bar{\theta}_m^\star-\bar{\theta}_n)\leq x|X_1,\ldots,X_n,Z)|>z|X_1,\ldots,X_n,Z)dz\\
	\leq& \int_0^\infty 2\exp(-2Bz^2) dz = \sqrt{\frac{\pi}{2B}}
\end{align*}
which converges to zero. Therefore the assertion of the theorem follows. 
\newpage
\section{Details on BLBQuant} \label{sec:appendix-blb}
For completeness, we state here the algorithm BLBQuant from \cite{chadha2024resampling}. The parameters are chosen in the following way:
\begin{align*}
	s&=\min\left\{\max\left\{2,\left\lfloor 10 \frac{\log(n)}{\epsilon}\right\rfloor \right\} , n\right\}\\
	I_t &=[-t\sqrt{n},t\sqrt{n}] \textnormal{ , for } t=1,\ldots,T\\
	T &= \left\lceil 5 b_\sigma \sqrt{n}\right\rceil,
\end{align*}
where $b_\sigma$ is an upper bound for the standard derivation of $\sqrt{n}(\bar{\theta}(G_n)-\theta(G))$. For the truncated normal distribution, we chose $b_\sigma =5$ and for logistic regression we chose $b_\sigma=1$. The number of bootstrap iterations is chosen as $B=300$.\\
A $(2\epsilon,\delta)$-DP symmetric $1-\alpha$ confidence interval for $\theta(G)$ is then obtained by combining the $\epsilon$-DP output of BLBQuant with the $(\epsilon,\delta)$-DP estimator $\bar{\theta}(G_n)$, i.e.
\[ CI_{BLB}=[\bar{\theta}(G_n) -\bar{t}/\sqrt{n}, \bar{\theta}(G_n) +\bar{t}/\sqrt{n}]. \]

For a comparison with $\mu$-GDP, we choose $(2\epsilon,\delta)$ in the following way: Depending on the sample size take $\delta=\frac{1}{n}, \frac{1}{n^2}$. Then choose $\epsilon$ (depending on $\mu$), such that $\delta(2\epsilon)=\delta$, where $\delta(\epsilon)$ as in \cref{lem:dp_to_gdp}.
\begin{algorithm}[H]
	\begin{algorithmic}
		\State {\bf Input:} Sample $G_n$, estimator $\theta(\cdot)$ and its $(\epsilon,\delta)$-DP version $\bar{\theta}(\cdot)$, number of bootstrap iterations $B$, number of bags $s$, privacy parameter $(\epsilon, \delta)$, sequence of sets of interest $I_{1:T}$, level $\alpha$
		\State {\bf Output:} a  $\epsilon$-DP $1-\alpha$ confidence set for $\sqrt{n}(\theta(G)-\bar{\theta}(G_n))$
		\State Draw $s$ disjoint subsamples $G_m^{sub(1)},\ldots,G_m^{sub(s)} $ of size $m=\lfloor n/s \rfloor$.
		\For{$t=1,\ldots,T$}
		\For{$i=1,\ldots,s$}
		\State $\hat{\theta}_m \leftarrow \theta(G_m^{sub(i)})$
		\For{$j=1,\ldots,B$}
		\State Draw resample $G_n^\star$ of size $n$ iid with replacement from $G_m^{sub(i)}$
		\State $\theta_j^\star\leftarrow \bar{\theta}(G_n^\star)$
		\EndFor
		\State $\hat{p}_i(t)\leftarrow\frac{1}{B}\sum_{j=1}^B\mathbb{I}\{\sqrt{n}(\hat{\theta}_m-\theta_j^\star)\in I_t\}$
		\EndFor
		\State $\hat{p}(t)=(\hat{p}_1(t),\ldots,\hat{p}_s(t))$
		\EndFor
		\State Draw $\xi_0\sim Lap(s/2,2/\epsilon)$ and $\xi_i\sim Lap(0,4/\epsilon)$ iid for $i=1,\ldots T$.
		\State $\bar{t}=\mathbf{AboveThr}(\{\hat{p}(1),\ldots,\hat{p}(T)\},1-\alpha,(\xi_0,\xi_1,\ldots,\xi_T))$
		\If{$\bar{t}=\perp$}
		\State \Return $(-\infty, \infty)$
		\EndIf
		\State \Return $I_{\bar{t}}$
	\end{algorithmic}
	\caption{{\bf BLBQuant} from \cite{chadha2024resampling}}
	\label{algo:BLBQuant}
\end{algorithm}

\begin{algorithm}[H]
	\begin{algorithmic}
		\State {\bf Input: } data $y(1),\ldots,y(T)\in\mathbb{R}^s$, threshold $\tau\in\mathbb{R}$, Noise $\xi_0,\xi_1,\ldots,\xi_T\in \mathbb{R}$ 
		\State {\bf Output: } With high probability and $\epsilon$-DP: $\min \{t : \textnormal{med}(y(t))=\textnormal{med}(y_1(t),\ldots,y_s(t))\geq \tau\}$  
		\For{$t=1,\ldots,T$}
		\State Let $y_{(1)}(t)\leq\ldots\leq y_{(s)}(t)$ be the order statistic of $y_1(t),\ldots,y_s(t)$.\\
		\State $\hat{v} \leftarrow \begin{cases}
			y_{(\lfloor\xi_0 +\xi_t \rfloor)}(t) & \textnormal{ , if } 1\leq \xi_0 +\xi_t \leq s\\
			-\infty \mathbb{I}\{\xi_0 +\xi_t<1\} + \infty \mathbb{I}\{\xi_0 +\xi_t>s\} &\textnormal{ , otherwise}
		\end{cases}$\\
		\If{$\hat{v}\geq \tau$}
		\State \Return t
		\EndIf
		\EndFor
		\State \Return $\perp$
	\end{algorithmic}
	\caption{{\bf AboveThr} from \cite{chadha2024resampling}}
	\label{algo:AboveThr}
\end{algorithm}

\end{document}